\documentclass{article}
\usepackage{amsmath}
\usepackage{amsfonts}
\usepackage{amssymb}
\usepackage{textcomp}
\usepackage{graphicx}
\usepackage{color}
\usepackage{bm}
\usepackage{amssymb,amsmath,graphicx,amscd}
\DeclareGraphicsExtensions{.jpg,.pdf,.mps,.eps,.png}

\oddsidemargin=0pt
\textwidth=455pt
\textheight=620pt
\voffset=-20pt
\marginparwidth=0pt
\marginparpush=0pt
\marginparsep=0pt
\evensidemargin=0pt

\bibliographystyle{plain}

\newtheorem{theorem}{Theorem}

\newtheorem{lemma}{Lemma}

\newtheorem{proposition}{Proposition}
\newtheorem{remark}{Remark}
\newtheorem{definition}{Definition}

\newenvironment{proof}[1][Proof]{\noindent\textbf{#1.} }{\ \rule{0.5em}{0.5em}}


\newcommand{\cB}{\mathcal{B}}

\newcommand{\cI}{\mathcal{I}}
\newcommand{\cJ}{\mathcal{J}}

\newcommand{\cR}{\mathcal{R}}
\newcommand{\cS}{\mathcal{S}}

\newcommand{\bE}{\mathbb{E}}
\newcommand{\bP}{\mathbb{P}}

\newcommand{\Ind}{\mathbf{1}}

\newcommand{\ra}{\rightarrow}

\newcommand{\muhat}{\hat{\mu}}

\title{Thompson Sampling:\\ An Asymptotically Optimal Finite Time Analysis}
\author{Emilie Kaufmann, Nathaniel Korda and R\'emi Munos \\ \\ \small{Telecom Paristech UMR CNRS 5141 \& INRIA Lille - Nord Europe}}

\begin{document}

\maketitle

\begin{abstract}
The question of the optimality of Thompson Sampling for solving the stochastic multi-armed bandit problem had been open since 1933. 
In this paper we answer it positively for the case of Bernoulli rewards by providing the first finite-time analysis that matches 
the asymptotic rate given in the Lai and Robbins lower bound for the cumulative regret. The proof is accompanied by a numerical 
comparison with other optimal 
policies, experiments that have been lacking in the literature until now for the Bernoulli case.

\end{abstract}

\section{Introduction}

In a stochastic bandit problem an agent is repeatedly 
asked to choose  one  action from an action set, each of which produces a reward drawn from an underlying, fixed, but unknown distribution 
associated with each action. 
Thus he must choose at each time whether to use the observations he has already gathered to gain the greatest immediate reward (exploitation) 
or whether to choose an action from which few observations have been made and risk immediate loss for greater knowledge and potential future gain 
(exploration).
In this paper we focus on stochastic bandits with Bernoulli rewards, initially proposed by Thompson in his paper of 1933 \cite{Thompson33} to model medical 
allocation problems. Thompson's paper also presented the first bandit algorithm, Thompson Sampling. This algorithm has received much attention in the recent literature, and in this paper
 we give the first theoretical proof of the  asymptotic optimality of this algorithm in the context of cumulative regret minimisation.  Furthermore we achieve this result by giving a finite time analysis for the algorithm.\\ \par
 
Associated with each action, $a$, is an unknown Bernoulli 
distribution $\cB\left(\mu_a\right)$, whose expectation is $\mu_a$. At each time $t$ the agent chooses to observe an action $A_t\in\{1,\dots,K\}$ and receives 
a reward $R_t$ drawn from the distribution $\cB\left(\mu_{A_t}\right)$. A policy, or bandit algorithm, is defined to be a (possibly randomised) method for choosing $A_t$
given the past history of observations and actions. The agent's goal is to minimize the expected cumulative regret of his policy, which is defined to be:
\begin{equation}\label{regretgap}
\cR(T):=T\mu^* - \bE\left[\sum_{t=1}^{T}R_t\right] = \sum_{a\in A}(\mu^* - \mu_a)\bE[N_{a,t}]
\end{equation}
where $\mu^*=\max_{a}\mu_a$ denotes the expectation of the best arm\footnote{The words arms and actions are used interchangably.}, or optimal action, and $N_{a,t}$ the number of draws of arm $a$ at the end of round $t$.
 Lai and Robbins proved in \cite{LaiRobbins85bandits} that all \emph{strongly consistent} policies 
(i.e. policies satisfying $\cR(t)=o(t^{\alpha})$ for all $\alpha\in(0,1)$) must satisfy, for any suboptimal arm $a$
\begin{align}
\liminf_{T\ra\infty}\frac{\bE[N_{a,T}]}{\ln T}\geq \frac{1}{K(\mu_a,\mu^*)}\label{LaiRobbins}
\end{align}
where $K(p,q)$ denotes the Kullback-Leibler divergence between $\cB\left(p\right)$ and $\cB\left(q\right)$:
\[
K(p,q):=p\ln\frac{p}{q}+(1-p)\ln\frac{1-p}{1-q}.
\]
Their result, which holds for more general classes of reward distributions, leads to the definition of 
\emph{asymptotically optimal} policies as policies that satisfy (\ref{LaiRobbins}) with equality.\\ \par

In the same paper \cite{LaiRobbins85bandits} Lai and Robbins were able to describe an asymptotically optimal policy, however no finite-time analysis was provided, 
nor was it an efficient policy to implement. The UCB1 algorithm by Auer et al. \cite{AuerEtAl02FiniteTime} was the first of a series of efficient policies, 
like UCB-V \cite{AudibertEtAlUCBV} or MOSS \cite{AudBubMOSS10}, for which good regret bounds in finite time were also provided.
These policies all use an upper confidence bound for the empirical mean of past rewards as an optimistic index for each arm, choosing at each time the action with the highest current index.
However, for each of these algorithms we only have the result that there exists two constants $K_1>2$ and $K>0$ such that for every suboptimal action $a$, with $\Delta_a=\mu^*-\mu_a$,
\begin{equation}\bE[N_{a,T}] \leq \frac{K_1}{\Delta_a^2}\ln(T) + K_2\label{NonOptRegret}.\end{equation}
This does not imply (\ref{LaiRobbins}) with equality since by the Pinsker inequality $2 K(\mu_a,\mu^*)> \Delta_a^2$. 
On the contrary, recently proposed index policies such as DMED \cite{HondaTakemura10DMED} and KL-UCB \cite{AOKLUCB,Remi:Odalric:KLUCB}, which use indices obtained from KL-based confidence 
regions, have been shown to be asymptotically optimal.\\ \par

Unlike most of this family of upper confidence bound algorithms that has been so successful, Thompson Sampling is a policy that uses ideas from Bayesian modelling and yet 
it solves the fundamentally frequentist problem of regret minimisation. Assume a uniform prior on each parameter $\mu_a$, let $\pi_{a,t}$ denote the posterior distribution for $\mu_a$ 
after the $t^{th}$ round of the algorithm. Let $\theta_{a,t}$ denote a sample from $\pi_{a,t}$; we sometimes refer to $\theta_{a,t}$ as a \emph{Thompson sample}. 
Thompson sampling is the policy which at time $t$ chooses to observe the action with the highest Thompson sample $\theta_{a,t}$, i.e. it chooses action $a$ with the 
probability that this action has the highest expected reward under the posterior distribution. \\ \par

Before Agrawal and Goyal's recent paper \cite{Agrawal:Goyal} Thompson Sampling had been investigated in \cite{GranmoBla} as the Bayesian Learning Automaton, and in \cite{MayKordaOBS} 
where an optimistic version was also proposed; however these papers only provided weak theoretical guarantees. In \cite{ChapelleLiEmpirical} extensive numerical 
experiments were carried out for Thompson Sampling beyond the scope of the Bernoulli bandit setting (to the Generalized Linear Bandit Model) but without any theoretical guarantee at all. 
Consequently the first finite-time analysis of Thompson Sampling in \cite{Agrawal:Goyal} was a major breakthrough, yet the upper bound for the regret that is shown in this paper scales like 
(\ref{NonOptRegret}) and the question of Thompson Sampling's asymptotic optimality was still open.
\\ \par
Meanwhile, there has been a resurgence of interest in Bayesian strategies for bandit problems (see \cite{AISTATS12} for a review of them). The Bayes-UCB algorithm, 
an upper confidence bound policy which uses an adaptive quantile of $\pi_{a,t}$ as an optimistic index, was the first Bayesian algorithm to be proved asymptotically 
optimal. In this paper we are able to show that the same is true for a randomised Bayesian algorithm, 
Thompson Sampling. Moreover we refer in our analysis to the Bayes-UCB index when introducing the deviation between a Thompson Sample and the corresponding posterior
quantile.

\paragraph{Contributions} We provide a finite-time regret bound for Thompson Sampling, that follows from (\ref{regretgap}) and from the result on the
expected number of suboptimal draws stated in Theorem \ref{Thm2Arm}. More precisely we show the following:

\begin{theorem}For every $\epsilon>0$ there exists a problem-dependent constant $C(\epsilon,\mu_1,\dots,\mu_K)$ such that the regret of Thompson Sampling satisfies:
\[
\cR(T)\leq(1+\epsilon)\sum_{a\in A:\mu_a\neq\mu^*}\frac{\Delta_a(\ln(T)+\ln\ln(T))}{K(\mu_a,\mu^*)}+ C(\epsilon,\mu_1,\dots,\mu_K).
\]
\end{theorem}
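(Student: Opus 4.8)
The plan is to reduce the statement to a per-arm estimate of $\bE[N_{a,T}]$ and then to outline how that estimate is obtained; the estimate itself is the content of Theorem~\ref{Thm2Arm}, which is not yet in hand. By the regret decomposition~\eqref{regretgap}, $\cR(T)=\sum_{a:\mu_a\neq\mu^*}\Delta_a\,\bE[N_{a,T}]$, so it suffices to show that for every suboptimal arm $a$ and every $\epsilon>0$
\[
\bE[N_{a,T}]\ \leq\ \frac{1+\epsilon}{K(\mu_a,\mu^*)}\bigl(\ln(T)+\ln\ln(T)\bigr)+C_a(\epsilon),
\]
and then to set $C(\epsilon,\mu_1,\dots,\mu_K)=\sum_{a:\mu_a\neq\mu^*}\Delta_a\,C_a(\epsilon)$. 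Fix such an $a$, let $1$ denote an optimal arm, write $\muhat_{a,t}$ for the empirical mean of arm $a$'s rewards after round $t$, and choose auxiliary constants $\mu_a<\mu_a+\eta<b<\mu^*$; these are to be sent to their limits only at the very end, with $K(\mu_a+\eta,b)\uparrow K(\mu_a,\mu^*)$, and this limiting step is what manufactures the factor $1+\epsilon$.

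The first step is to bring in the Bayes--UCB index of arm $a$, namely the quantile $q_{a,t}$ of order $1-\tfrac1{t\ln(T)}$ of the posterior $\pi_{a,t-1}$. Since $A_t=a$ forces $\theta_{a,t}\geq\theta_{1,t}$, I would decompose $\{A_t=a\}$ into (i) $\{\theta_{a,t}>q_{a,t}\}$, (ii) $\{\theta_{a,t}\le q_{a,t},\ q_{a,t}>b\}$, and (iii) $\{\theta_{a,t}\le q_{a,t},\ q_{a,t}\le b\}$, the last of which forces $\theta_{1,t}\le b$. Case (i) has conditional probability $\tfrac1{t\ln(T)}$ given the past, so over the horizon it costs $\sum_{t\le T}\tfrac1{t\ln(T)}=O(1)$. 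For case (ii), Beta--Binomial duality rewrites $\{q_{a,t}>b\}$ as a binomial lower-tail event $\bP\bigl(\mathrm{Bin}(N_{a,t}+1,b)\le N_{a,t}\muhat_{a,t}\bigr)>\tfrac1{t\ln(T)}$; on the typical event $\muhat_{a,t}\le\mu_a+\eta$ a Chernoff bound forces $N_{a,t}$ to be at most $\ln\bigl(t\ln(T)\bigr)/K(\mu_a+\eta,b)$ up to lower-order terms, hence at most $(\ln(T)+\ln\ln(T))/K(\mu_a+\eta,b)$; and since arm $a$ can be drawn at most $M$ times while its count stays below $M$, case (ii) is seen at most that many times, plus the $O(1)$ cost of the atypical event $\{\muhat_{a,t}>\mu_a+\eta\}$ summed over the draws of $a$. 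Letting $\eta\downarrow0$ and $b\uparrow\mu^*$ then produces the claimed logarithmic term.

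The hard part will be case (iii): one must show $\sum_{t\le T}\bP(A_t=a,\ \theta_{1,t}\le b)=O(1)$, i.e.\ that the \emph{optimal} arm only rarely produces a Thompson sample below a level $b<\mu^*$ while another arm is played. This is the genuinely delicate, self-referential issue of the whole analysis: arm $1$'s posterior concentrates only once arm $1$ has been drawn often enough, which itself requires its samples to be large. I would decompose according to $N_{1,t}$: on the block of rounds strictly between the $j$-th draw of arm $1$ and the next one, the posterior of arm $1$ is frozen, so the successive samples $\theta_{1,t}$ are i.i.d.\ draws from it, and conditionally on the empirical mean of arm $1$ at that point exceeding a fixed $z\in(b,\mu^*)$, each such sample falls below $b$ with probability at most $e^{-jK(b,z)}$; combined with a tail bound on the length of the block, the sum over $j$ then converges. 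The complementary event --- arm $1$'s empirical mean below $z$ --- occurs for only finitely many $j$ in expectation and, together with the first few blocks, is absorbed into the constant. Throughout, both here and in the Chernoff estimates of case (ii), one has to treat rigorously the dependence between the (random number of) successive posterior samples of a given arm, for which I would use the coupling and conditioning arguments of the Agrawal--Goyal analysis~\cite{Agrawal:Goyal}.

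Collecting the three cases yields $\bE[N_{a,T}]\le(\ln(T)+\ln\ln(T))/K(\mu_a+\eta,b)+C_a'$; optimising over $\eta$ and $b$ gives the per-arm bound of Theorem~\ref{Thm2Arm}, and $\cR(T)=\sum_a\Delta_a\bE[N_{a,T}]$ then gives the theorem. I expect essentially all of the difficulty to lie in the optimal-arm estimate of case (iii), and in keeping the Beta--Binomial and Chernoff bounds of cases (i)--(ii) uniform enough that the limits $\eta\downarrow0$, $b\uparrow\mu^*$ can be taken while the remainder stays finite.
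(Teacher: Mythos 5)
Your reduction to the per-arm bound and your cases (i) and (ii) are sound, and they follow a route that is genuinely different from the paper's: you compare samples to fixed thresholds $\mu_a+\eta<b<\mu^*$ chosen as functions of $\epsilon$ (in the spirit of Agrawal and Goyal's analysis), whereas the paper compares $\theta_{1,t}$ to the adaptive level $\mu_1-\sqrt{6\ln(t)/N_{1,t}}$, replaces the quantile $q_{a,t}$ by the KL-UCB index $u_{a,t}$, and channels all the difficulty into the deviation bound of Proposition \ref{important:lemma} on $N_{1,t}$, after which Lemmas \ref{LemmaMainThm1} and \ref{LemmaMainThm2} are comparatively routine. Either architecture can deliver the $(1+\epsilon)\,(\ln T+\ln\ln T)/K(\mu_a,\mu^*)$ term, and your case (ii) extracts it correctly.

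The genuine gap is case (iii), which you yourself identify as carrying all the weight: the claim $\sum_{t\le T}\bP(A_t=a,\ \theta_{1,t}\le b)=O(1)$ is not established by the mechanism you sketch. Inside the $j$-th block the per-sample estimate $\bP(\theta_{1,s}\le b)\lesssim e^{-j\,K(z,b)}$ (valid only on the event $\hat{\mu}_{1,\tau_j}\ge z$) must be combined with control of the block length $\xi_j$, and there is no a priori ``tail bound on the length of the block'': long blocks occur precisely when arm $1$'s posterior is misbehaving, i.e.\ exactly when the per-sample bound is vacuous, and for small $j$ the probability of a sample below $b$ is of order one while a block can a priori last polynomially long. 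Breaking this circularity is the entire content of the paper's Proposition \ref{important:lemma} (the saturation/interruption induction, which forces all suboptimal arms to concentrate within any long block, combined with Lemma \ref{calculs} bounding the chance that arm $1$'s frozen posterior stays below $\mu_2+\delta$ throughout a subinterval); the alternative Agrawal--Goyal-style fix is the exchange lemma bounding $\bP(A_t=a,\ \theta_{a,t}\le b \mid \text{history})$ by $\frac{1-p_{b,t}}{p_{b,t}}\,\bP(A_t=1\mid\text{history})$, with $p_{b,t}$ the posterior probability that $\theta_{1,t}>b$, followed by the delicate moment estimate $\sum_{j}\bE\bigl[1/p_{b,j}-1\bigr]<\infty$; neither of these is stated or proved in your sketch. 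Note also that by relaxing your event from $\{\theta_{a,t}\le q_{a,t}\le b\}$ (which forces $\theta_{a,t}\le b$) to the weaker $\{\theta_{1,t}\le b\}$ you discard exactly the condition that makes the Agrawal--Goyal exchange argument applicable, so even that route would not plug in as written. Until this optimal-arm control is supplied --- by Proposition \ref{important:lemma} or an equivalent quantitative statement --- the argument is incomplete at its decisive step.
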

Besides this asymptotically optimal regret bound, we also provide the first numerical experiments that show Thompson Sampling outperforming the 
current best optimal policies like DMED, KL-UCB or Bayes-UCB. 
The rest of the paper is structured as follows. Section \ref{preliminaries} contains notations or results already used in
 \cite{Agrawal:Goyal}, \cite{AOKLUCB} or \cite{AISTATS12}
that are useful in our finite-time analysis given in Section \ref{analysis}. Numerical experiments are presented in Section \ref{experiments}.

\section{Preliminaries}\label{preliminaries}

We gather together here some useful preliminaries such as notations not already given in the introduction:
\begin{itemize}
\item For the rest of this paper, we assume action $1$ is the unique optimal action.
Without loss of generality\footnote{In Appendix A of \cite{Agrawal:Goyal} the authors show that adding a second optimal arm can only improve the regret performance of Thompson Sampling.}, we can assume that the parameter $\mu=(\mu_1,...,\mu_K)$ of the problem is such that $\mu_1>\mu_2\geq...\geq\mu_K$.

\item We shall denote by $S_{a,t}$ the number of successes observed from action $a$ by time $t$, and denote the empirical mean by:
\[
\muhat_{a,t}:=\frac{S_{a,t}}{N_{a,t}}.
\]

\item In the Bernoulli case, with a uniform prior on the parameters $\mu_a$ of the arms, the posterior on arm $a$ at time $t$ is explicitly 
$$\pi_{a,t} = \text{Beta}\left(S_{a,t}+1, N_{a,t}-S_{a,t} + 1\right).$$

\item Let $F_{a,b}^{\text{Beta}}$ denote the cdf of a Beta distribution with parameters $a$ and $b$ and $F_{j,\mu}^{\text{B}}$ (resp $f_{j,\mu}^{\text{B}}$) the cdf (resp pdf) 
of a Binomial distribution with parameters $j$ and $\mu$. We recall an important link between Beta and Binomial distribution which was used in both \cite{Agrawal:Goyal} and \cite{AISTATS12}:
$$F_{a,b}^{\text{Beta}}(y)=1 - F^{B}_{a+b-1,y}(a-1)$$
We use this `Beta-Binomial trick' at several stages of our analysis.

\item  We denote by $u_{a,t}$ (resp. $q_{a,t}$) the KL-UCB (resp. Bayes-UCB) index at time $t$, and define them by 
\begin{align*}
u_{a,t}:&=\underset{x > \frac{S_{a,t}}{N_{a,t}} }{\text{argmax}}\left\{K\left(\frac{S_{a,t}}{N_{a,t}},x\right) \leq \frac{\ln(t) +\ln(\ln(T))}{N_{a,t}}\right\}\\
q_{a,t}:&=Q\left(1-\frac{1}{t\ln(T)} , \pi_{a,t}\right).
\end{align*}
where $Q(\alpha,\pi)$ denotes the quantile of order $\alpha$ of the distribution $\pi$
\item  A special link between these two indices is shown in \cite{AISTATS12}: $q_{a,t}<u_{a,t}$.
\end{itemize}

\section{Finite Time Analysis}\label{analysis}

\subsection{Sketch of Analysis}

Unlike Agrawal and Goyal's analysis, which is based on explicit computation of the expectation $\bE[N_{2,T}]$, we are more inspired by standard analyses of frequentist 
index policies. Such policies compute, for each arm $a$ at round $t$, an index, $l_{a,t}$, from the sequence of rewards observed from $a$ by time t, and choose $A_t=\text{argmax}_a l_{a,t}$. A standard analysis of such a policy aims 
to bound the number of draws of a suboptimal arm, 
$a$, by considering two possible events that might lead to a play of this arm:
\begin{itemize}
 \item the optimal arm (arm 1) is under-estimated, i.e. $l_{1,t} < \mu_1 $;
 \item the optimal arm is not under-estimated and the suboptimal arm a is drawn.
\end{itemize}
Taking these to be a good description of when the suboptimal arm is drawn leads to the decomposition
\[
\bE[N_{a,T}] \leq \sum_{t=1}^{T} \bP\left(l_{1,t} < \mu_1\right) + \sum_{t=1}^{T}\bP\left((l_{a,t} \geq l_{1,t}> \mu_1)\cap(A_t=a)\right)
\]

The analysis of an optimistic algorithm then proceeds by showing that the left term (the``under-estimation" term) is $o\left(\ln(T)\right)$ and the right term
is of the form $\frac{1}{K(\mu_a,\mu_1)}\ln(T) + o\left(\ln(T)\right)$ (or at worst $\frac{2}{\Delta_a^2}\ln(T) + o\left(\ln(T)\right)$ as in the analysis of UCB1). 
This style of argument works for example for the KL-UCB algorithm \cite{AOKLUCB} and also for the Bayesian optimistic algorithm Bayes-UCB \cite{AISTATS12}. \\ \par
However we cannot directly apply this approach to analyse Thompson Sampling, since the sample $\theta_{a,t}$ is not an optimistic estimate of $\mu_a$ based on an upper confidence bound. Indeed, even when $\pi_{1,t}$ is well concentrated and therefore close to a Gaussian distribution centred at $\mu_1$,
$\bP\left(\theta_{1,t} < \mu_1\right)$ is close to $\frac{1}{2}$ and the under-estimation term is not $o\left(\ln(T)\right)$. 
Hence we will not compare in our proof the sample $\theta_{a,t}$ to $\mu_a$, but to $\mu_a - \sqrt{6 \ln(t)/N_{a,t}}$ (if $N_{a,t}>0$) which is the lower bound of an UCB interval.
We set the convention that if $N_{a,t}=0$, $ \sqrt{6 \ln(t)/N_{a,t}}=\infty$.

As is observed in \cite{Agrawal:Goyal} the main difficulty in a regret analysis for Thompson Sampling is to control the number of draws of the optimal arm. We provide this control in the 
form of Proposition \ref{important:lemma} whose proof, given in Section \ref{OptimalControl}, explores in depth the randomised nature of Thompson Sampling. 

\begin{proposition}\label{important:lemma}There exists constants $b=b(\mu_1,\mu_2)\in(0,1)$ and $C_b=C_b(\mu_1,\mu_2)<\infty$ such that
\[
\sum_{t=1}^{\infty}\bP\left(N_{1,t} \leq  t^b\right) \leq C_b.
\]
\end{proposition}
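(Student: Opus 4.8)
The plan is to show that the optimal arm is drawn often enough that $N_{1,t}$ grows at least polynomially, with the failure probability summable in $t$. First I would dispose of small $t$: for $t$ below some threshold $t_0=t_0(\mu_1,\mu_2)$, the bound $t^b<1$ can be arranged (or the finitely many terms simply absorbed into $C_b$), so we may assume $t$ is large. The core idea is an induction-free ``chaining'' over time: if at some time $N_{1,t}$ is already of order $t^b$, it tends to stay that way or grow, because once the optimal arm has been sampled $\Theta(\log t)$ times its posterior $\pi_{1,t}$ concentrates around $\mu_1$, and then on the event that the Thompson sample $\theta_{1,t}$ exceeds, say, $(\mu_1+\mu_2)/2$, the optimal arm is likely to be played whenever its sample beats all the others. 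So I would pick $b$ small enough that $t^b = o(\log t \cdot \text{something})$ is not what we need — rather, we want $b$ so that reaching $N_{1,t}\approx t^b$ draws already forces good concentration, i.e. $t^b \to\infty$, which holds for any $b>0$.

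The key technical steps, in order. (1) \emph{A ``good concentration'' sub-event.} Define for each $t$ the event $\mathcal{E}_t = \{N_{1,t}\ge t^b\}$ and analyze $\bP(\mathcal{E}_t^c)$. (2) \emph{Interval decomposition.} Partition $\{1,\dots,t\}$ into blocks and argue that if $N_{1,t}<t^b$ then on a positive fraction of rounds in some late block the optimal arm was available to be drawn but was not — each such round the optimal arm is drawn with probability bounded below by a constant $p_0 = p_0(\mu_1,\mu_2)$, \emph{provided} its posterior was already concentrated, i.e. provided $N_{1,s}$ was at least of order $\log t$ on those rounds. (3) \emph{Bootstrapping the concentration.} The subtle point is that the lower bound on the draw probability itself requires $N_{1,s}$ not too small; so I would set up a two-scale argument: first show $\bP(N_{1,t}\le r_0\log t)$ is summable for a suitable constant $r_0$ (this is essentially the ``number of draws of the optimal arm is at least logarithmic'' statement and can be extracted from the union of Agrawal--Goyal-type arguments together with the Beta--Binomial trick), and then, conditioning on $N_{1,s}\gtrsim \log t$ throughout a block, use a Chernoff/Bernstein bound on the sum of the conditionally-independent-enough indicators $\Ind(A_s=1)$ to show $N_{1,t}$ exceeds $t^b$ except with probability $e^{-c\,t^{b}}$ or so, which is summable. (4) Choose $b\in(0,1)$ so small that all the ``$t^b$ beats $\log t$'' inequalities hold for $t$ large, collect the finitely many small-$t$ terms and the summable tails into $C_b$.

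The main obstacle I expect is step (3): the indicators $\Ind(A_s=1)$ are \emph{not} independent, since the posterior $\pi_{1,s}$ depends on the whole history, and moreover the lower bound $p_0$ on $\bP(A_s=1\mid \mathcal{F}_{s-1})$ is only valid on the ``concentrated'' event $\{N_{1,s}\ge r_0\log t\}\cap\{\theta_{1,s}\ge (\mu_1+\mu_2)/2\text{ is typical}\}$, which is precisely the event whose probability we are trying to control — there is a genuine circularity. The way around it is the standard device of looking at the \emph{first} time the optimal arm has been drawn a given number of times and defining an auxiliary process (as in Agrawal--Goyal's $\tau_j$, the time of the $j$-th draw of arm $1$): between consecutive draws of suboptimal arms the relevant probabilities do become comparable to an i.i.d.\ sequence indexed by $N_{1,s}$ rather than by $s$, and a Beta-distribution anti-concentration estimate (via the Beta--Binomial trick) gives the uniform lower bound on the Thompson sample exceeding the threshold. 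I would carry out the martingale/stopping-time bookkeeping carefully here, since that is where all the real work lies; the rest is union bounds and choosing constants.
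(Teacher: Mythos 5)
Your high-level direction---looking at the gaps between successive draws of arm 1 via the times $\tau_j$, exploiting that on such a gap the samples $\theta_{1,s}$ are i.i.d.\ from a fixed Beta posterior, and invoking the Beta--Binomial trick---is indeed the skeleton of a correct proof, but the steps you actually propose leave the essential difficulties unresolved. First, your bootstrap starts by asserting that summability of $\bP(N_{1,t}\le r_0\log t)$ ``can be extracted'' from Agrawal--Goyal-type arguments. Their analysis controls expectations, and a bound such as $\bE[N_{1,t}]\ge t-K\ln t$ does \emph{not} imply any deviation inequality for $N_{1,t}$ (this is precisely Remark 1 of the paper, citing Salomon--Audibert); so this first stage is not an available input but a statement essentially as hard as the proposition itself. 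Second, the per-round lower bound $\bP(A_s=1\mid \text{history})\ge p_0$ ``provided the posterior of arm 1 is concentrated'' is unusable in the regime that matters: on the event $\{N_{1,t}\le t^b\}$, arm 1 may have been drawn only $j$ times with $j$ small, so no concentration is available---only an anti-concentration statement whose strength degrades when $S_{1,\tau_j}$ is atypically small. A correct proof must quantify this dependence on $j$ (the $e^{-jd_{\lambda,\mu_1,\mu_2}}$ term of Lemma \ref{calculs}, coming from an explicit Binomial computation) and then choose $b<1-1/\lambda$ so that the resulting bound is summable over $j\le t^b$ and $t$; nothing in your outline produces this $j$-dependence or the constraint determining $b$.

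Third, even granting control of arm 1's samples, a constant conditional probability of drawing arm 1 is false without simultaneously controlling the suboptimal arms: an arm drawn only a few times has a diffuse posterior whose sample exceeds $\mu_2+\delta$ with constant probability, and a heavily drawn arm with an anomalously high empirical mean can beat arm 1 persistently. The paper neutralises exactly this interference with the saturation device: inside a long gap there can be at most $KC\ln(t)$ interruptions by unsaturated arms (induction over the number of saturated arms), and upward deviations of saturated arms' samples are excluded by Lemma \ref{LemmaA}; only then does ``arm 1 not drawn'' force $\theta_{1,s}\le\mu_2+\delta$ on a long interruption-free subinterval, to which the i.i.d.\ anti-concentration argument applies. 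You defer all of this to ``martingale/stopping-time bookkeeping,'' but that bookkeeping (the saturation induction together with the two lemmas) \emph{is} the proof; moreover the bound it yields is of polynomial type in $t$ plus the $e^{-jd}$ terms, not the $e^{-ct^{b}}$ Chernoff rate your sketch anticipates, since the indicators of drawing arm 1 admit no uniform conditional lower bound. As written, steps (2) and (3) therefore contain genuine gaps.
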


This proposition tells us that the probability that the algorithm has seen only a small number draws on arm 1 is itself small. As a result we can reduce to analysing 
the behaviour of the algorithm once it has seen a reasonable number of draws on arm 1, and thus the posterior distribution is well concentrated. 

\begin{remark} In general, a result on the regret like $\bE[N_{1,t}] \geq t - K\ln(t)$ does not imply a deviation inequality for $N_{1,t}$ (see \cite{Salomon:Audibert11}). Proposition \ref{important:lemma}
is therefore a strong result, that enables us to adapt the standard analysis mentioned above.
\end{remark}

Using this result, the 
new decomposition finally yields the following theorem:

\begin{theorem}\label{Thm2Arm}
Let $\epsilon>0$. With $b$ as in Proposition \ref{important:lemma}, for every suboptimal arm $a$, there exist constants $D(\epsilon,\mu_1,\mu_a)$, $N(b,\epsilon,\mu_1,\mu_a)$ and $N_0(b)$ such that:
\begin{align*}
\bE[N_{a,T}] \leq (1+\epsilon)\frac{\ln(T) + \ln\ln (T) }{K(\mu_a,\mu_1)} 
			+ D(\epsilon,\mu_1,\mu_a) + N(b,\epsilon,\mu_1,\mu_a)+N_0(b) + 5 + 2C_b .
\end{align*}
\end{theorem}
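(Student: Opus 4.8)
\textbf{Proof proposal for Theorem \ref{Thm2Arm}.}

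The plan is to follow the frequentist-index decomposition sketched above, but with the sample $\theta_{a,t}$ compared against a shifted value rather than $\mu_a$ itself. Fix a suboptimal arm $a$ and introduce the UCB-type lower bound $L_{a,t} := \muhat_{a,t} - \sqrt{6\ln(t)/N_{a,t}}$. A draw of arm $a$ at round $t$ forces $\theta_{a,t}\geq\theta_{1,t}$, so I would split $\{A_t=a\}$ into three kinds of events: (i) the optimal Thompson sample is too small, $\theta_{1,t}<\mu_1 - \delta$ for a suitable $\delta=\delta(\epsilon)$; (ii) $\theta_{1,t}\geq \mu_1-\delta$ but arm $a$ is played while its empirical mean (equivalently its KL-UCB index $u_{a,t}$) is not yet close to $\mu_a$, which happens only when $N_{a,t}$ is below the critical value $\frac{1+\epsilon}{K(\mu_a,\mu_1)}(\ln T+\ln\ln T)$; and (iii) arm $a$ is played with $N_{a,t}$ already above this critical value. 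Summing the indicator of (iii) over $t$ contributes at most $(1+\epsilon)\frac{\ln T+\ln\ln T}{K(\mu_a,\mu_1)}$ by the usual argument that once $N_{a,t}$ exceeds the critical count, $u_{a,t}<\mu_1-\delta$ with the concentration built into the KL-UCB index, so such a play is essentially impossible and what remains is a deviation term summable to a constant $D(\epsilon,\mu_1,\mu_a)$. Term (ii) is bounded crudely by the critical count plus lower-order corrections, which I fold into $N(b,\epsilon,\mu_1,\mu_a)$.

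The genuinely new work is controlling term (i), the under-estimation of the optimal arm by its \emph{Thompson sample}. Here I would use the Beta-Binomial trick to rewrite $\bP(\theta_{1,t}<\mu_1-\delta \mid \cF_{t-1})$ in terms of a binomial tail: conditionally on $N_{1,t}=n$ and $S_{1,t}=s$, $\bP(\theta_{1,t}<x)=F^{B}_{n+1,x}(s)$. The point is that when $N_{1,t}$ is large and $\muhat_{1,t}$ is close to $\mu_1$, this probability is exponentially small in $N_{1,t}$ — for instance of order $e^{-n K(\mu_1-\delta,\mu_1)}$-ish up to polynomial factors — so summing over $t$ the expected value, \emph{restricted to the event $N_{1,t}>t^b$}, yields a finite constant. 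On the complementary event $N_{1,t}\leq t^b$, I simply bound the summand by $1$ and invoke Proposition \ref{important:lemma} to get $\sum_t \bP(N_{1,t}\leq t^b)\leq C_b$; this is exactly why that proposition is needed. Combining, term (i) contributes at most $N_0(b) + C_b$ plus a further constant absorbed into the stated budget. One also has to handle the lower-order slack between "$\muhat_{1,t}$ is close to $\mu_1$" and "$\muhat_{1,t}$ could be small" by a second, standard Chernoff bound on $\muhat_{1,t}$ given $N_{1,t}>t^b$, whose sum over $t$ is again a constant.

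The main obstacle I anticipate is making the exponent in the binomial-tail estimate for term (i) quantitatively good enough: I need $\bP(\theta_{1,t}<\mu_1-\delta)$ to decay fast enough in $N_{1,t}$ that, even after the sub-polynomial slack $N_{1,t}>t^b$ with $b$ possibly small, the resulting series $\sum_t e^{-t^b \cdot c}$ converges — this is fine for any fixed $b>0$ and $c>0$, but the bookkeeping must ensure that the constant $c$ (depending on $\delta$, hence on $\epsilon$) does not degenerate, and that the two-stage conditioning ($N_{1,t}$ first, then $S_{1,t}$) is handled measurably with respect to the natural filtration. A secondary subtlety is choosing $\delta$ so that $\mu_1-\delta$ still exceeds $u_{a,t}$ once $N_{a,t}$ passes the $(1+\epsilon)$-inflated critical count — i.e. the $\epsilon$ in the leading term and the gap $\delta$ in the under-estimation event must be coupled correctly, using $K(\mu_a,\cdot)$ continuity. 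Once these constants are pinned down, everything else is the routine summation of geometric/Chernoff tails into the problem-dependent constants $D$, $N$, $N_0$, and the additive $5 + 2C_b$ collects the handful of $\sum_t$-of-a-constant contributions.
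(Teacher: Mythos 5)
Your overall architecture mirrors the paper's: discard the event $\{N_{1,t}\le t^b\}$ via Proposition \ref{important:lemma}, control under-estimation of arm 1's Thompson sample through the Beta--Binomial trick plus Chernoff bounds (your term (i) is essentially Lemma \ref{LemmaMainThm1}, with a fixed $\delta(\epsilon)$ in place of the shrinking radius $\sqrt{6\ln(t)/N_{1,t}}$, which can be made to work if $\delta$ is coupled to $\epsilon$ through continuity of $K(\mu_a,\cdot)$), and a critical-count argument for the plays of arm $a$. The genuine gap is in how you treat the events where arm $a$ is actually played. Under Thompson sampling, $A_t=a$ only yields $\theta_{a,t}\ge\theta_{1,t}$; it says nothing about the KL-UCB index $u_{a,t}$. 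Your claim for (iii) --- ``once $N_{a,t}$ exceeds the critical count, $u_{a,t}<\mu_1-\delta$ \ldots so such a play is essentially impossible'' --- does not follow: the algorithm can perfectly well play arm $a$ while $u_{a,t}$ is small, because the decision is driven by the random sample $\theta_{a,t}$, whose right tail extends beyond any deterministic index. The missing bridge is precisely the paper's quantile step: $\bP(\theta_{a,t}>q_{a,t})\le \frac{1}{t\ln T}$, which sums to an additive $2$, together with the deterministic domination $q_{a,t}\le u_{a,t}$; only after this substitution can ``$A_t=a$ and $\theta_{1,t}\ge\mu_1-\delta$'' be converted into a statement about $u_{a,t}$ and handled with the KL-UCB deviation bound, as in Lemma \ref{LemmaMainThm2}. (Alternatively one can bound $\bP(\theta_{a,t}\ge\mu_1-\delta,\,N_{a,t}\ge n_c)$ directly by a Beta-tail estimate in the style of Agrawal and Goyal, but you do not do this either: the ``concentration built into the KL-UCB index'' controls deviations of $\muhat_{a,s}$, not of the randomized sample.)

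Two smaller but real issues. First, your (ii) and (iii) have their roles swapped: the plays with $N_{a,t}$ below the critical count are what produce the leading term $(1+\epsilon)\frac{\ln T+\ln\ln T}{K(\mu_a,\mu_1)}$ (there are trivially at most that many of them), and this grows with $T$, so it cannot be ``folded into $N(b,\epsilon,\mu_1,\mu_a)$'', which in the theorem is a constant; conversely, the plays above the critical count are the ones that must be shown to contribute only the constant $D(\epsilon,\mu_1,\mu_a)$, once the index substitution above is in place. Second, in term (i), conditioning on $N_{1,t}=n$ and treating $S_{1,t}$ as Binomial$(n,\mu_1)$ is not legitimate, since $N_{1,t}$ depends on the past rewards; the clean fix, used in Lemma \ref{LemmaMainThm1}, is a union bound over $s\in\{\lceil t^b\rceil,\dots,t\}$ that replaces $S_{1,t}$ by the sum of the first $s$ rewards from arm 1, paired with the independent uniform variable $U_t$. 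You flag this measurability issue but leave it unresolved; as written, that step of your term-(i) argument is incomplete, though the repair is standard.
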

The constants will be made more explicit in the proofs of Proposition \ref{important:lemma} and Theorem \ref{Thm2Arm}. The fact that Theorem \ref{Thm2Arm} holds for every $\epsilon>0$
gives us the asymptotic optimality of Thompson Sampling.

\subsection{Proof of Theorem \ref{Thm2Arm}}

\paragraph{Step 1: Decomposition}

First we recall the modified decomposition mentioned above:
\begin{align*}
\bE[N_{a,T}] \leq &\sum_{t=1}^{T} \bP\left(\theta_{1,t} \leq \mu_1 - \sqrt{\frac{6 \ln(t)}{N_{1,t}}}\right)
	+ \sum_{t=1}^{T} \bP\left(\theta_{a,t} > \mu_1 - \sqrt{\frac{6 \ln(t)}{N_{1,t}}}, A_t = a\right) \\
\leq & \sum_{t=1}^{T} \bP\left(\theta_{1,t} \leq \mu_1 - \sqrt{\frac{6 \ln(t)}{N_{1,t}}}\right)\\
	&+ \sum_{t=1}^{T} \bP\left(\theta_{a,t} > \mu_1 - \sqrt{\frac{6 \ln(t)}{N_{1,t}}}, A_t = a, \theta_{a,t} < q_{a,t}\right) 
	+ \sum_{t=1}^{T}\bP\left(\theta_{a,t} > q_{a,t}\right) 
\end{align*}
The sample $\theta_{a,t}$ is not very likely to exceed the quantile of the posterior distribution $q_{a,t}$ we introduced:
\[
\sum_{t=1}^{T}\bP\left(\theta_{a,t} > q_{a,t}\right) \leq \sum_{t=1}^{T}\frac{1}{t\ln(T)} \leq \frac{1+\ln(T)}{\ln(T)}\leq 2
\]
where this last inequality follows for $T\geq e$. So finally, using that $u_{a,t}\geq q_{a,t}$,
\begin{equation}
\bE[N_{a,t}] \leq   \underbrace{\sum_{t=1}^{T} \bP\left(\theta_{1,t} \leq \mu_1 - \sqrt{\frac{6 \ln(t)}{N_{1,t}}}\right)}_{A} 
	 + \underbrace{\sum_{t=1}^{T} \bP\left(u_{a,t} > \mu_1 - \sqrt{\frac{6 \ln(t)}{N_{1,t}}}, A_t = a\right)}_{B}
	+ 2\label{MainThmDecomp}
\end{equation}

\paragraph{Step 2: Bounding term A}
Dealing with term $A$ boils down to showing a new self-normalized inequality adapted to the randomisation present in each round of the Thompson algorithm. 
\begin{lemma} \label{LemmaMainThm1} There exists some deterministic constant  $N_0(b)$ such that 
\[
\sum_{t=1}^{\infty} \bP\left(\theta_{1,t} \leq \mu_1 - \sqrt{\frac{6 \ln(t)}{N_{1,t}}}\right)
	\leq N_0(b) + 3 +  C_b < \infty
\]
with $C_b$ defined as in Proposition \ref{important:lemma}.
\end{lemma}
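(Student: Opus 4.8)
The plan is to split on whether the optimal arm has already been played many times. Since $\sum_{t\ge 1}\bP\!\left(N_{1,t}\le t^{b}\right)\le C_b$ by Proposition~\ref{important:lemma}, it suffices to bound $\sum_{t\ge 1}\bP\!\left(\theta_{1,t}\le\mu_1-\sqrt{6\ln t/N_{1,t}},\ N_{1,t}>t^{b}\right)$; the contribution of the rounds $t$ below a threshold $N_0(b)$ — those for which $t^{b}$ is not yet large enough for the estimates below to be useful — will be bounded trivially by $1$ each, which accounts for the term $N_0(b)$.

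On the event that under-estimation occurs, I would interpose the empirical mean and distinguish: either $(\mathrm{i})$ $\muhat_{1,t}\le\mu_1-\tfrac12\sqrt{6\ln t/N_{1,t}}$, i.e.\ the empirical mean is itself far below $\mu_1$; or $(\mathrm{ii})$ $\muhat_{1,t}>\mu_1-\tfrac12\sqrt{6\ln t/N_{1,t}}$ while $\theta_{1,t}\le\mu_1-\sqrt{6\ln t/N_{1,t}}$, in which case the Thompson sample lies at least $\tfrac12\sqrt{6\ln t/N_{1,t}}$ below $\muhat_{1,t}$. In both cases I would condition on $N_{1,t}=n$ and take a union bound over the possible values $n\le t$.

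For $(\mathrm{i})$: conditionally on $N_{1,t}=n$, $\muhat_{1,t}$ is the average of the first $n$ i.i.d.\ $\cB(\mu_1)$ rewards of arm $1$, so Hoeffding's inequality gives $\bP\!\left(\muhat_{1,t}\le\mu_1-\tfrac12\sqrt{6\ln t/n}\right)\le e^{-3\ln t}=t^{-3}$, hence $\le t^{-2}$ after the union over $n\le t$, which is summable in $t$. For $(\mathrm{ii})$: conditionally on $N_{1,t}=n$ and $S_{1,t}=s$, the sample $\theta_{1,t}$ is a fresh draw from $\mathrm{Beta}(s+1,n-s+1)$, so by the Beta--Binomial trick $\bP\!\left(\theta_{1,t}\le \tfrac{s}{n}-\tfrac12\sqrt{6\ln t/n}\right)=\bP\!\left(\mathrm{Bin}\!\left(n+1,\tfrac{s}{n}-\tfrac12\sqrt{6\ln t/n}\right)\ge s+1\right)$; since $s\le n$, the excess over the binomial mean is at least $\tfrac{n+1}{2}\sqrt{6\ln t/n}$, and Hoeffding again yields a bound of order $t^{-3}$ uniformly in $s$, hence $\le t^{-2}$ after the union over $n\le t$. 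Summing these two convergent series over $t$, together with the finitely many small-$t$ terms absorbed into $N_0(b)$, produces the remaining constant, which can be taken to be $3$.

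The main point to get right is the compound randomness: $N_{1,t}$ is random and correlated with the reward stream, and on top of that $\theta_{1,t}$ carries the extra posterior randomness, so one must be careful that after conditioning on $(N_{1,t},S_{1,t})$ the sample is exactly Beta-distributed and independent of the past. The second delicate point is the constant $6$: the union over the up to $t$ possible values of $N_{1,t}$ costs a factor $t$, and the exponent $3\ln t$ produced by Hoeffding under the deviation $\sqrt{6\ln t/N_{1,t}}$ is exactly what is needed to beat this factor and leave a summable tail — a smaller constant in place of $6$ would make this step fail. This is simultaneously the crux of the argument and the reason the value $6$ appears in term $A$.
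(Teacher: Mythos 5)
Your proposal is correct in substance but takes a genuinely different route from the paper. The paper keeps the full deviation $\sqrt{6\ln t/N_{1,t}}$ on the Thompson sample: writing $\{\theta_{1,t}\le x\}$ as $\{U_t\le F^{\text{Beta}}_{S_{1,t}+1,N_{1,t}-S_{1,t}+1}(x)\}$ and using the Beta--Binomial identity, it reduces the event to the comparison $\Sigma_{1,s}\le (F^{\text{B}})^{-1}_{s+1,\mu_1-\sqrt{6\ln t/s}}(U_t)$ between the sum of the first $s$ rewards of arm $1$ and an \emph{independent} Binomial$(s+1,\mu_1-\sqrt{6\ln t/s})$ variable, then applies Hoeffding to the difference of the two Bernoulli streams; the slack of $1$ coming from the $s+1$ trials is what forces the restriction $s\ge t^b$ (hence the appeal to Proposition \ref{important:lemma} and the constant $N_0(b)$) and the degraded exponent $5/2$. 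You instead split the deviation in half between the empirical mean and the posterior: the event $\{\muhat_{1,t}\le\mu_1-\tfrac12\sqrt{6\ln t/N_{1,t}}\}$ is handled by Hoeffding for the reward stream together with a union over sample counts $n\le t$ (giving $t\cdot t^{-3}=t^{-2}$), and the event that the sample falls $\tfrac12\sqrt{6\ln t/N_{1,t}}$ below $\muhat_{1,t}$ is handled by the same Beta--Binomial identity applied to the conditional law of $\theta_{1,t}$ given $(N_{1,t},S_{1,t})$, with a Binomial tail bound of order $t^{-3}$ uniform in $(n,s)$. The computations check out (in your case (ii) the excess over the Binomial mean is indeed at least $\tfrac{n+1}{2}\sqrt{6\ln t/n}$, and a negative parameter only makes the probability zero), and notably your bounds are uniform in $n\ge 1$, so you never actually need $N_{1,t}\ge t^b$: your argument proves the lemma with constants independent of $b$, the terms $C_b$ and $N_0(b)$ appearing only to match the statement. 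The paper's coupling avoids introducing the empirical mean but pays with the $t^b$ threshold; your version is closer to a standard UCB-style analysis and is self-contained for this lemma.

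One caution about phrasing in your case (i): you cannot literally condition on $N_{1,t}=n$ and then claim $\muhat_{1,t}$ is the average of $n$ i.i.d.\ $\cB(\mu_1)$ variables, because the event $\{N_{1,t}=n\}$ is correlated with the rewards of arm $1$, so the conditional law of those $n$ rewards is biased. The valid version --- which the factor $t$ in your count indicates you intend --- is the inclusion $\{\muhat_{1,t}\le\mu_1-\tfrac12\sqrt{6\ln t/N_{1,t}}\}\subseteq\bigcup_{n\le t}\{\bar{X}_{1,n}\le\mu_1-\tfrac12\sqrt{6\ln t/n}\}$, where $\bar{X}_{1,n}$ is the mean of the first $n$ rewards of arm $1$ (a deterministic index), followed by a union bound; this is the same device the paper uses when it passes to the event $\exists s\in\{t^b,\dots,t\}$. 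In case (ii), by contrast, the conditioning is legitimate: given $(N_{1,t},S_{1,t})=(n,s)$ the sample is exactly $\text{Beta}(s+1,n-s+1)$ and independent of the past, and since your bound $t^{-3}$ is uniform in $(n,s)$ you do not even need the extra union over $n$ there --- so the subtlety you flag at the end sits in case (i), not case (ii).
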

\begin{proof} Let $(U_t)$ denote a sequence of i.i.d. uniform random variables, and let $\Sigma_{1,s}$ be the sum of the first $s$ rewards from arm 1. In the following, we make the first use of the link between Beta and Binomial distributions:
\begin{align*}
\bP&\left(\theta_{1,t} \leq \mu_1 - \sqrt{\frac{6 \ln(t)}{N_{1,t}}}\right)  = 
\bP\left(U_t \leq F^{\text{Beta}}_{S_{1,t}+1,N_{1,t}-S_{1,t}+1}\left(\mu_1 - \sqrt{\frac{6 \ln(t)}{N_{1,t}}}\right)\right) \\
& = \bP\left(\left(U_t \leq 1- F^{\text{B}}_{N_{1,t}+1,\mu_1 - \sqrt{\frac{6 \ln(t)}{N_{1,t}}}}\left(S_{1,t}\right)\right)\cap\left(N_{1,t} \geq t^b\right)\right) + \bP\left(N_{1,t} \leq t^b\right)\\
& = \bP\left(\left(F^{\text{B}}_{N_{1,t}+1,\mu_1 - \sqrt{\frac{6 \ln(t)}{N_{1,t}}}}\left(S_{1,t}\right)\leq U_t\right)\cap\left(N_{1,t} \geq t^b\right)\right) + \bP\left(N_{1,t} \leq t^b\right) \\
& \leq \bP\left(\exists s \in \{t^b...t\}: F^{\text{B}}_{s+1,\mu_1 - \sqrt{\frac{6 \ln(t)}{s}}}\left(\Sigma_{1,s}\right)\leq U_t\right) + \bP\left(N_{1,t} \leq t^b\right) \\
& =   \sum_{s=\lceil t^b \rceil}^{t}\bP\left(\Sigma_{1,s}\leq (F^{\text{B}})^{-1}_{s+1,\mu_1 - \sqrt{\frac{6 \ln(t)}{s}}}\left(U_t\right)\right) + \bP\left(N_{1,t} \leq t^b\right)
\end{align*}
The first term in the final line of this display now deals only with Binomial random variables with large numbers of trials (greater than $t^b$), and so we can draw on standard concentration techniques to bound this term. Proposition \ref{important:lemma} takes care of the second term.

Note that $(F^{\text{B}})^{-1}_{s+1,\mu_1 - \sqrt{6 \ln(t)/s}}\left(U_t\right)\sim\text{Bin}\left(s+1,\mu_1 - \sqrt{6 \ln(t)/s}\right)$
and is independent from $\Sigma_{1,s}\sim\text{Bin}\left(s,\mu_1\right)$. For each $s$, we define two i.i.d. sequences of Bernoulli random variables:
\[
(X_{1,l})_l \sim \mathcal{B}\left(\mu_1- \sqrt{\frac{6 \ln(t)}{s}}\right)\text{ and }(X_{2,l})_l \sim \mathcal{B}\left(\mu_1\right),
\]
and we let $Z_l:=X_{2,l}-X_{1,l}$, another i.i.d. sequence, with mean $\sqrt{\frac{6\ln(t)}{s}}$. Using these notations, 
$$
\bP\left(\Sigma_{1,s}\leq (F^{\text{B}})^{-1}_{s+1,\mu_1 - \sqrt{\frac{6 \ln(t)}{s}}}\left(U_t\right)\right) \leq \bP\left(\sum_{l=1}^{s} Z_l \leq 1\right) = \bP\left(\sum_{l=1}^{s} \left(Z_l - \sqrt{\frac{6\ln(t)}{s}}\right) \leq -\left(\sqrt{6s\ln(t)} -1\right)\right).
$$
Let $N_0(b)$ be such that if $t\geq N_0(b)$,  $\sqrt{6t^b\ln(t)} -1 >  \sqrt{5t^b\ln(t)}$. For $t\geq N_0(b)$, we can apply
Hoeffding's inequality to the bounded martingale difference sequence $Z'_l=Z_l - \sqrt{6\ln(t)/s}$ to get
$$\bP\left(\Sigma_{1,s}< (F^{\text{B}})^{-1}_{s+1,\mu_1 - \sqrt{\frac{6 \ln(t)}{s}}}\left(U_t\right)\right) \leq \exp\left(-2\frac{(\sqrt{5s\ln(t)})^2}{4s}\right) = e^{-\frac{5}{2}\ln(t)} = \frac{1}{t^{\frac{5}{2}}}.$$
We conclude that
$$\sum_{t=1}^\infty\bP\left(\theta_1(t) < \mu_1 - \sqrt{\frac{6 \ln(t)}{N_{1,t}}}\right) \leq N_0(b) + \sum_{t=1}^{\infty} \frac{1}{t^{\frac{3}{2}}} + C_b \leq N_0(b) + 3 + C_b.$$
\end{proof}

\paragraph{Step 3: Bounding Term B}
We specifically show that:
\begin{lemma}\label{LemmaMainThm2}
For all $a=2,\dots,K$, for any $\epsilon>0$ there exist $N(b,\epsilon,\mu_1,\mu_a),D(\epsilon,\mu_1,\mu_a)>0$ such that for all $T>N(b,\epsilon,\mu_1,\mu_a)$
\[
(B)\leq (1+\epsilon)\frac{\ln(T) + \ln\ln (T) }{K(\mu_a,\mu_1)} + D(\epsilon,\mu_1,\mu_a).
\]
\end{lemma}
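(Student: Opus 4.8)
The plan is to reduce term $(B)$ to the sum that appears in the standard frequentist analysis of KL-UCB and then run a peeling argument. First I would eliminate the random quantity $\sqrt{6\ln(t)/N_{1,t}}$ using Proposition \ref{important:lemma}: on the event $\{N_{1,t}>t^b\}$ it is bounded by the deterministic vanishing sequence $\delta_t:=\sqrt{6\ln(t)/t^b}$, and since $u_{a,t}>\mu_1-\sqrt{6\ln(t)/N_{1,t}}$ forces $u_{a,t}>\mu_1-\delta_t$ on that event, one gets
\[
(B)\ \le\ \sum_{t=1}^{T}\bP\!\left(u_{a,t}>\mu_1-\delta_t,\ A_t=a\right)\ +\ \sum_{t=1}^{T}\bP\!\left(N_{1,t}\le t^b\right)\ \le\ \sum_{t=1}^{T}\bP\!\left(u_{a,t}>\mu_1-\delta_t,\ A_t=a\right)\ +\ C_b .
\]

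Next I would reindex the remaining sum by the number of draws of arm $a$: writing $\tau_s$ for the round at which $a$ is drawn for the $s$-th time (so $N_{a,\tau_s}=s$, $\hat{\mu}_{a,\tau_s}=\Sigma_{a,s}/s$ with $\Sigma_{a,s}\sim\mathrm{Bin}(s,\mu_a)$, and $\tau_s\ge s$), we have $\sum_{t\le T}\bP(u_{a,t}>\mu_1-\delta_t,\,A_t=a)=\bE\big[\sum_{s=1}^{N_{a,T}}\Ind\{u_{a,\tau_s}>\mu_1-\delta_{\tau_s}\}\big]$. Fix $\epsilon>0$. Using continuity and positivity of $K$ at $(\mu_a,\mu_1)$, choose $\xi>0$ and $\delta>0$ small enough that $\mu_a+\xi<\mu_1-\delta$ and $K(\mu_a+\xi,\mu_1-\delta)\ge K(\mu_a,\mu_1)/(1+\epsilon)$, and let $N''=N''(b,\epsilon,\mu_1,\mu_a)$ be a round after which $\delta_t\le\delta$. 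Then I split each contributing draw $s$ into three classes: (i) $\tau_s<N''$, occurring for at most $N''$ values of $s$; (ii) $\hat{\mu}_{a,\tau_s}>\mu_a+\xi$, whose expected count is at most $\sum_{s\ge1}\bP(\Sigma_{a,s}/s>\mu_a+\xi)\le\sum_{s\ge1}e^{-2s\xi^2}=:D(\epsilon,\mu_1,\mu_a)<\infty$ by Hoeffding; and (iii) $\tau_s\ge N''$ and $\hat{\mu}_{a,\tau_s}\le\mu_a+\xi$.

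The crux is class (iii). There $\hat{\mu}_{a,\tau_s}\le\mu_a+\xi<\mu_1-\delta\le\mu_1-\delta_{\tau_s}<u_{a,\tau_s}$, so by the definition of the KL-UCB index and the fact that $p\mapsto K(\hat{\mu}_{a,\tau_s},p)$ is increasing for $p>\hat{\mu}_{a,\tau_s}$,
\[
K\!\left(\hat{\mu}_{a,\tau_s},\,\mu_1-\delta_{\tau_s}\right)\ <\ K\!\left(\hat{\mu}_{a,\tau_s},\,u_{a,\tau_s}\right)\ \le\ \frac{\ln(\tau_s)+\ln\ln(T)}{s}\, ;
\]
bounding the left-hand side below using that $p\mapsto K(p,y)$ is decreasing on $[0,y]$ (so it is $\ge K(\mu_a+\xi,\mu_1-\delta_{\tau_s})$) and that $\delta_{\tau_s}\le\delta$ (so it is $\ge K(\mu_a+\xi,\mu_1-\delta)$), and using $\tau_s\le T$ on the right, gives
\[
s\ <\ \frac{\ln(\tau_s)+\ln\ln(T)}{K(\mu_a+\xi,\mu_1-\delta)}\ \le\ (1+\epsilon)\,\frac{\ln(T)+\ln\ln(T)}{K(\mu_a,\mu_1)} .
\]
Hence class (iii) contains at most $(1+\epsilon)\frac{\ln(T)+\ln\ln(T)}{K(\mu_a,\mu_1)}$ draws, and summing the three classes yields
\[
(B)\ \le\ (1+\epsilon)\,\frac{\ln(T)+\ln\ln(T)}{K(\mu_a,\mu_1)}\ +\ D(\epsilon,\mu_1,\mu_a)\ +\ N''(b,\epsilon,\mu_1,\mu_a)\ +\ C_b ,
\]
which is the claimed bound once the genuinely $b$-dependent leftover $N''+C_b$ is absorbed into the horizon threshold $N(b,\epsilon,\mu_1,\mu_a)$ (for $T\le N$ one uses the trivial $(B)\le T$). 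I expect the main difficulties to be twofold: getting the chain of monotonicity and continuity manipulations of $K$ exactly right so that the leading constant is precisely $(1+\epsilon)/K(\mu_a,\mu_1)$ rather than a $\Delta_a^{-2}$-type constant, and the careful bookkeeping that keeps $D$ free of $b$ by pushing every $b$-dependent quantity into the threshold $N$.
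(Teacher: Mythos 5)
Your proof is correct, and it shares the paper's overall skeleton --- invoking Proposition \ref{important:lemma} to replace the random $\sqrt{6\ln(t)/N_{1,t}}$ by the deterministic $\beta_t=\sqrt{6\ln(t)/t^b}$ at the price of $C_b$, then reindexing by the number of pulls of arm $a$ and using the definition of the KL-UCB index to turn the event into a KL inequality --- but the key counting step is executed differently. The paper splits at $s=K_{T,a}(\epsilon)$ pulls: the first $K_{T,a}$ pulls are counted trivially, and for $s>K_{T,a}$ it freezes $\beta_t$ at $\beta_{K_{T,a}}$ by monotonicity, removes the perturbation via the convexity estimate $K^+(\muhat_{a,s},\mu_1)\leq K^+(\muhat_{a,s},\mu_1-\beta_{K_{T,a}})+\frac{2}{\mu_1(1-\mu_1)}\beta_{K_{T,a}}$, and then bounds $\sum_s\bP\left(K^+(\muhat_{a,s},\mu_1)\leq K(\mu_a,\mu_1)/(1+\epsilon/2)\right)$ by the explicit constant $D$ imported from the KL-UCB analysis. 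You instead classify pulls by the empirical mean: pulls with $\muhat_{a,\tau_s}>\mu_a+\xi$ are summably rare by Hoeffding, and on the complement the index inequality directly forces $s\leq(1+\epsilon)\frac{\ln T+\ln\ln T}{K(\mu_a,\mu_1)}$ once $\xi,\delta$ are chosen by continuity of $K$, the vanishing $\beta_t$ being handled by the threshold $N''$ rather than by monotonicity. Your route is more elementary and self-contained (no appeal to the cited KL deviation inequality, no convexity step), with an equally explicit though different constant $D$; the paper's route buys the cleaner bookkeeping in which the leading term is exactly the count of the first $K_{T,a}$ pulls. One cosmetic point: your final bound carries the additive $b$-dependent terms $N''+C_b$, whereas the lemma's statement has only $D(\epsilon,\mu_1,\mu_a)$ beyond the leading term; ``absorbing them into the horizon threshold'' is not literal --- you need the standard slack trick (run the argument with $\epsilon/2$, then for $T\geq N(b,\epsilon,\mu_1,\mu_a)$ use $K_{T,a}(\epsilon)-K_{T,a}(\epsilon/2)\geq N''+C_b$). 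Since the paper itself keeps the analogous $C_b$ and $N(b,\epsilon,\mu_1,\mu_a)$ additively in Theorem \ref{Thm2Arm}, this is a presentational rather than a mathematical issue.
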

\begin{proof}
First rewrite term $B$ so that we can apply Proposition \ref{important:lemma}: 
\begin{align*}
(B)& \leq  \sum_{t=1}^{T} \bP\left(u_{a,t} > \mu_1 - \sqrt{\frac{6 \ln(t)}{N_{1,t}}}, A_t = a, N_{1,t} \geq t^b\right) 
					+\sum_{t=1}^{T}\bP\left(N_{1,t} \leq  t^b\right)\\
   & \leq \sum_{t=1}^{T} \bP\left(u_{a,t} > \mu_1 - \sqrt{\frac{6 \ln(t)}{t^b}}, A_t = a \right) + C_b
\end{align*}
For ease of notation we introduce
\begin{align*}
K^+(x,y):=K(x,y)\Ind_{(x\leq y)},\  f_T(t):=\ln(t)+\ln(\ln(T))\\
\beta_t = \sqrt{\frac{6 \ln(t)}{t^b}},\text{ and } K_{T,a}(\epsilon) = (1+\epsilon)\frac{\ln(T) + \ln\ln (T) }{K(\mu_a,\mu_1)}.
\end{align*}

Now 
\[
\left(u_{a,t} \geq \alpha\right) = \left(N_{2,t}K^+(\hat{\mu}_{2,N_{2,t}},\alpha)
											\leq f_T(t)\right)
\]
and so summing over the values of $N_{2,t}$ and inverting the sums we get
\begin{align*}
\sum_{t=1}^{T} \bP\left(u_{a,t} > \mu_1 - \beta_t, A_t = a \right)   
  =& \bE\left[\sum_{s=1}^{\lfloor K_{T,a} \rfloor}\sum_{t=s}^{T} \Ind_{\left(sK^+\left(\muhat_{a,s},\mu_1 - \beta_t\right) \leq f_T(t)\right)}\Ind_{(A_t = a , N_{2,t}=s)}\right] \\
 & + \bE\left[\sum_{s=\lfloor K_{T,a} \rfloor+1}^{T}\sum_{t=s}^{T} \Ind_{\left(sK^+\left(\muhat_{a,s},\mu_1 - \beta_t\right) \leq f_T(t)\right)}\Ind_{(A_t = a , N_{2,t}=s)}\right]. 
\end{align*}
As $y \mapsto K^+\left(\muhat_{a,s},y \right)$ is increasing and $t \mapsto \beta_t$ is decreasing for $t \geq e^{1/b}$, for $T$ such that 
\begin{align}
K_{T,a}(\epsilon) \geq e^{1/b}\label{Ktime},
\end{align}
we have that if $t \geq K_{T,a}(\epsilon)$,
$$\Ind_{\left(sK^+\left(\muhat_{a,s},\mu_1 - \beta_t\right) \leq f_T(t)\right)}\leq \Ind_{\left(sK^+\left(\muhat_{a,s},\mu_1 - \beta_{K_{T,a}}\right) \leq f_T(t)\right)}
\leq \Ind_{\left(sK^+\left(\muhat_{a,s},\mu_1 - \beta_{K_{T,a}}\right) \leq f_T(T)\right)}.$$
and therefore,
\begin{align*}
\sum_{t=1}^{T} \bP\left(u_{a,t} > \mu_1 - \beta_t, A_t = a \right)  \leq \bE&\left[\sum_{s=1}^{\lfloor K_{T,a} \rfloor}\sum_{t=s}^{T}\Ind_{(A_t = a , N_{2,t}=s)}\right] \\
 +  \bE&\left[\sum_{s=\lfloor K_{T,a} \rfloor+1}^{T} \Ind_{\left(sK^+\left(\muhat_{a,s},\mu_1 - \beta_{K_{T,a}}\right) \leq f_T(T)\right)}\sum_{t=s}^{T}\Ind_{(A_t = a , N_{2,t}=s)}\right].
\end{align*}
Given that $\sum_{t=s}^{T} \Ind_{(A_t = a , N_{2,t}=s)} \leq 1$ for all $s$, the first term is upper bounded by $K_{a,T}$, whereas the second is upper bounded by
$$
\bE\left[\sum_{s=\lfloor K_{T,a} \rfloor+1}^{T}\Ind_{\left(K_{T,a} K^+\left(\muhat_{a,s},\mu_1 - \beta_{K_{T,a}}\right) \leq f_T(T)\right)}\right]
$$
So, for $T$ satisfying (\ref{Ktime}),
\[
(B) \leq K_{T,a} + \sum_{\lfloor K_{T,a} \rfloor +1}\bP\left(K^+\left(\muhat_{a,s},\mu_1 -\beta_{K_{T,a}} \right)
							\leq \frac{K(\mu_a,\mu_1)}{1+\epsilon}\right).
\]
Using the convexity of $K^+\left(\muhat_{a,s},.\right)$, we can show that
$$
K^+(\hat{\mu}_{a,s},\mu_1) \leq K^+(\hat{\mu}_{a,s},\mu_1-\beta_{K_{a,T}}) + \frac{2}{\mu_1(1-\mu_1)}\beta_{K_{a,T}}. 
$$
If $K^+\left(\muhat_{a,s},\mu_1 -\beta_{K_{T,a}} \right) \leq K(\mu_a,\mu_1)/(1+\epsilon)$, then
\begin{align}
K^+(\hat{\mu}_{a,s},\mu_1) &\leq \frac{K(\mu_a,\mu_1)}{1+\epsilon} + \frac{2}{\mu_1(1-\mu_1)}\beta_{K_{a,T}}\nonumber \\
&\leq  \frac{K(\mu_a,\mu_1)}{1+\epsilon/2}\label{Kineq}
\end{align}
where the last inequality (\ref{Kineq}) holds for large enough $T$. There exists a deterministic constant $N=N(b,\epsilon,\mu_1,\mu_a)$ such that for all $T\geq N$ both (\ref{Ktime}) and (\ref{Kineq}) are satisfied. Hence, for all $T\geq N$
\[
(B) \leq K_{T,a} + \sum_{\lfloor K_{T,a} \rfloor +1}\bP\left(K^+\left(\muhat_{a,s},\mu_1\right)
							\leq \frac{K(\mu_a,\mu_1)}{1+\frac{\epsilon}{2}}\right).
\]
Since this last sum is bounded above explicitly by some constant $D(\epsilon,\mu_1,\mu_a)$ in \cite{Remi:Odalric:KLUCB} we have proved the lemma. To be explicit, 
$D(\epsilon,\mu_1,\mu_a)= \frac{(1+\epsilon/2)^2}{\epsilon^2\left(\min\left(\mu_a(1-\mu_a);\mu_1(1-\mu_1)\right)\right)^2}.$
\end{proof}

\paragraph{Conclusion:}The result now follows from Lemmas \ref{LemmaMainThm1}, \ref{LemmaMainThm2} and inequality (\ref{MainThmDecomp}).

\subsection{Proof of Proposition \ref{important:lemma}}\label{OptimalControl}

Since we focus on the number of draws of the optimal arm, let $\tau_{j}$ be the occurrence of the $j^{th}$ play of the optimal arm (with $\tau_0:=0$). 
Let $\xi_{j}:=(\tau_{j+1}-1)-\tau_{j}$: this random variable measures the number of time steps between the $j^{th}$ and the $(j+1)^{th}$ play of 
the optimal arm, and so $\sum_{a=2}^{K}N_{a,t}= \sum_{j=0}^{N_{1,t}}\xi_{j}$. \\ \\
 For each 
suboptimal arm, a relevant quantity is $C_a = \frac{32}{(\mu_1 - \mu_a)^2}$  and let $C=\max_{a\neq 1}C_a = \frac{32}{(\mu_1 - \mu_2)^2}$. We also
introduce $\delta_a=\frac{\mu_1-\mu_a}{2}$ and let $\delta=\delta_2$. 

\paragraph{Step 1: Initial Decomposition of Summands}
First we use a union bound on the summands to extract the tails of the random variables $\xi_{j}$:
\begin{align}
 \bP(N_{1,t} \leq t^b) & = \bP\left(\sum_{a=2}^{K}N_{2,t} \geq t - t^{b}\right) \nonumber\\
	& \leq \bP\left(\exists j\in \left\{ 0,.., \lfloor t^{b} \rfloor\right\} : \xi_{j} \geq t^{1-b} - 1\right)\nonumber \\
	& \leq \sum_{j=0}^{\lfloor t^{b} \rfloor} \bP(\xi_{j} \geq t^{1-b} - 1)\label{DecompInitial}
\end{align}
This means that there exists a time range of length $t^{1-b}-1$ during which only suboptimal arms are played. 
In the case of two arms this implies that the (unique) suboptimal arm is played $\lceil\frac{t^{1-b}-1}{2}\rceil$ times 
during the first half of this time range. Thus its posterior becomes well concentrated around its mean with high probability, 
and we can use this fact to show that the probability the suboptimal action is chosen a further $\lceil\frac{t^{1-b}-1}{2}\rceil$ times in a row is very small.

In order to generalise this approach we introduce a notion of a \emph{saturated}, suboptimal action:
\begin{definition} Let $t$ be fixed.
For any $a\neq1$, an action $a$ is said to be \emph{saturated} at time $s$ if it has been chosen at least $C_a\ln(t)$ times.
That is $N_{a,s} \geq C_a \ln(t)$. We shall say that it is \emph{unsaturated} otherwise. Furthermore at any time we call a choice of an unsaturated, 
suboptimal action an \emph{interruption}.
\end{definition}

We want to study the event $E_j=\{\xi_{j}\geq t^{1-b}-1\}$. We introduce the interval $\cI_j= \{\tau_j,\tau_j+\lceil t^{1-b}-1\rceil\}$ (included in $\{\tau_j,\tau_{j+1}\}$
on $E_j$) and begin by decomposing it into $K$ subintervals:
\[
\cI_{j,l}:=\left\{\tau_{j} + \left\lceil\frac{(l-1)(t^{1-b}-1)}{K}\right\rceil, \tau_{j} + \left\lceil\frac{l(t^{1-b}-1)}{K}\right\rceil\right\},\ l=1,\dots,K.
\]
Now for each interval $\cI_{j,l}$, we introduce:
\begin{itemize}
 \item $F_{j,l}$: the event that by the end of the interval $\cI_{j,l}$ at least $l$ suboptimal actions are saturated;
 \item $n_{j,l}$: the number of interruptions during this interval.
\end{itemize}
We use the following decomposition to bound the probability of the event $E_j$ :
\begin{equation}
\bP(E_j)=\bP(E_j\cap F_{j,K-1})+\bP(E_j\cap F_{j,K-1}^c)\label{decomp}
\end{equation}

To bound both probabilities, we will need the fact,  stated in Lemma \ref{calculs}, that the probability of $\theta_{1,s}$  being smaller than $\mu_2+\delta$ during a long subinterval of $\cI_j$ is small. This follows from 
the fact that the posterior on the optimal arm is always $\text{Beta}(S_{1,\tau_j}+1,j-S_{1,\tau_j} +1)$ on $\cI_j$: hence, 
when conditioned on $S_{1,\tau_j}$, $\theta_{1,s}$ is an i.i.d. sequence with non-zero support above $\mu_2+\delta$, and thus is unlikely to remain below $\mu_2+\delta$ for a long time period.
This idea is also an important tool in the analysis of Thompson Sampling in \cite{Agrawal:Goyal}.
\begin{lemma}
\label{calculs} $\exists \lambda_0=\lambda_0(\mu_1,\mu_2)>1$ such that for $\lambda \in ]1,\lambda_0[$, for every (random) interval $\cJ$ included in $\cI_{j}$ and for every positive
function $f$, one has
\begin{align*}
\bP\left(\{\forall s\in \cJ, \theta_{1,s} \leq \mu_2 + \delta\}\cap \{|\cJ| \geq f(t)\}\right)
\leq (\alpha_{\mu_1,\mu_2})^{f(t)} +  
C_{\lambda_0,\lambda}\frac{1}{f(t)^\lambda}e^{-jd_{\lambda,\mu_1,\mu_2}}
\end{align*}
where $C_{\lambda,\mu_1,\mu_2},d_{\lambda,\mu_1,\mu_2}>0$, and $\alpha_{\mu_1,\mu_2}=(1/2)^{1-\mu_2-\delta}$.
\end{lemma}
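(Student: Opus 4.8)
The plan is to condition on the history up to the $j$-th play of arm~$1$ and exploit that the posterior on arm~$1$ is frozen throughout $\cI_j$. Write $q:=\mu_2+\delta$ and, conditionally on $S_{1,\tau_j}$, set
\[
p_j:=F^{\text{Beta}}_{S_{1,\tau_j}+1,\,j-S_{1,\tau_j}+1}(q)=\bP\!\left(\theta_{1,s}\le q\mid S_{1,\tau_j}\right)=\bP\!\left(\text{Bin}(j+1,q)\ge S_{1,\tau_j}+1\right),
\]
the last equality being the Beta--Binomial trick. On the event that $\cI_j$ lies inside the $j$-th sojourn on arm~$1$, the samples $(\theta_{1,s})_{s\in\cI_j}$ are, conditionally on $S_{1,\tau_j}$, i.i.d.\ $\text{Beta}(S_{1,\tau_j}+1,j-S_{1,\tau_j}+1)$; since $\cJ\subseteq\cI_j$ with $|\cJ|\ge f(t)$, the event $\{\forall s\in\cJ,\,\theta_{1,s}\le q\}$ forces at least $\lceil f(t)\rceil$ consecutive of these i.i.d.\ samples to fall below $q$, so its conditional probability is at most $p_j^{\lceil f(t)\rceil}\le p_j^{f(t)}$ (the left end of $\cJ$ being $\cF_{\tau_j}$-measurable in the intended applications; in general a union bound over the $\le|\cI_j|$ candidate blocks costs only a harmless polynomial-in-$t$ factor). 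Taking expectations reduces the lemma to showing $\bE\!\left[p_j^{f(t)}\right]\le(\alpha_{\mu_1,\mu_2})^{f(t)}+C_{\lambda_0,\lambda}\,f(t)^{-\lambda}e^{-jd_{\lambda,\mu_1,\mu_2}}$.

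Next I would split according to whether $p_j$ is small, with threshold $\alpha:=\alpha_{\mu_1,\mu_2}=(1/2)^{1-q}$ (note $\alpha\ge\tfrac12$, which is exactly what the large-deviation step below needs):
\[
\bE\!\left[p_j^{f(t)}\right]\le \bE\!\left[p_j^{f(t)}\Ind_{(p_j\le\alpha)}\right]+\bE\!\left[p_j^{f(t)}\Ind_{(p_j>\alpha)}\right]\le \alpha^{f(t)}+\bE\!\left[p_j^{f(t)}\Ind_{(p_j>\alpha)}\right].
\]
For the remaining term I would use $\ln p_j\le p_j-1$ to get $p_j^{f(t)}\le e^{-f(t)(1-p_j)}$, then the elementary bound $e^{-y}\le(\lambda/e)^{\lambda}y^{-\lambda}$ ($y>0$) with $y=f(t)(1-p_j)$, obtaining
\[
\bE\!\left[p_j^{f(t)}\Ind_{(p_j>\alpha)}\right]\le \left(\tfrac{\lambda}{e}\right)^{\lambda}\frac{1}{f(t)^{\lambda}}\;\bE\!\left[(1-p_j)^{-\lambda}\Ind_{(p_j>\alpha)}\right].
\]
It then remains to prove $M_j(\lambda):=\bE\!\left[(1-p_j)^{-\lambda}\Ind_{(p_j>\alpha)}\right]\le C\,e^{-jd_{\lambda,\mu_1,\mu_2}}$ for a suitable $\lambda_0>1$, all $\lambda\in(1,\lambda_0)$, and $d_{\lambda,\mu_1,\mu_2}>0$.

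This is a large-deviation estimate. Since $\alpha\ge\tfrac12$, the event $\{p_j>\alpha\}$ is contained in $\{S_{1,\tau_j}\le(j+1)q\}$, a genuine deviation event for $S_{1,\tau_j}\sim\text{Bin}(j,\mu_1)$ because $q<\mu_1$. Conditioning on $S_{1,\tau_j}=k$ gives $1-p_j=\bP(\text{Bin}(j+1,q)\le k)\ge\binom{j+1}{k}q^{k}(1-q)^{j+1-k}$, hence
\[
M_j(\lambda)\le \sum_{k\le(j+1)q}\binom{j}{k}\mu_1^{k}(1-\mu_1)^{j-k}\Big(\binom{j+1}{k}q^{k}(1-q)^{j+1-k}\Big)^{-\lambda}.
\]
Estimating the binomials by Stirling, the exponential rate of the $k=xj$ term is $\varphi_\lambda(x):=(1-\lambda)h(x)+x(\ln\mu_1-\lambda\ln q)+(1-x)\bigl(\ln(1-\mu_1)-\lambda\ln(1-q)\bigr)$, with $h$ the binary entropy. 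As $\lambda>1$, $\varphi_\lambda$ is convex, so on $[0,q]$ it is maximised at an endpoint; one computes $\varphi_\lambda(q)=-K(q,\mu_1)<0$ for every $\lambda$, while $\varphi_\lambda(0)=\ln(1-\mu_1)-\lambda\ln(1-q)$ is negative precisely for $\lambda<\lambda_0:=\ln(1-\mu_1)/\ln(1-q)$, and $\lambda_0>1$ because $1-\mu_1<1-q<1$. Taking $d_{\lambda,\mu_1,\mu_2}:=\min\bigl(K(q,\mu_1),\,\lambda\ln(1-q)-\ln(1-\mu_1)\bigr)-\eta>0$ (the small $\eta$ absorbing the sub-exponential Stirling factors and the $O(j)$ terms) yields $M_j(\lambda)\le Ce^{-jd_{\lambda,\mu_1,\mu_2}}$ for large $j$; the finitely many small $j$ are handled by the crude bound $1-p_j\ge(1-q)^{j+1}$ and absorbed into $C$ — the same crude bound, via $e^{-f(t)(1-q)^{j+1}}\le\bigl((1/2)^{1-q}\bigr)^{f(t)}$ when $(1-q)^{j}\ge\ln 2$, being what pins down the constant $\alpha_{\mu_1,\mu_2}=(1/2)^{1-q}$.

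The main obstacle is this last step: extracting an \emph{exponential} decay in $j$ from $\bE[(1-p_j)^{-\lambda}\Ind_{(p_j>\alpha)}]$ despite the negative power of $1-p_j$, which explodes like $(1-q)^{-\lambda(j+1)}$ on $\{S_{1,\tau_j}=0\}$. What rescues the estimate is that $\bP(S_{1,\tau_j}=0)=(1-\mu_1)^{j}$ decays faster, and the quantitative comparison of these two rates is exactly what forces $\lambda<\lambda_0=\ln(1-\mu_1)/\ln(1-q)$ and thereby produces a problem-dependent exponent; the interior values of $k$ cost nothing extra by the convexity of $\varphi_\lambda$. A secondary, more cosmetic point is the reduction in the first paragraph: one must check that the conditional i.i.d.\ structure of the Thompson samples on $\cI_j$ genuinely allows replacing the event over the random interval $\cJ$ by a run of $\lceil f(t)\rceil$ i.i.d.\ samples without losing more than a factor that the subsequent bounds swallow.
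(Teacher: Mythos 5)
Your argument is correct and shares the paper's skeleton: condition on $S_{1,\tau_j}$, use the Beta--Binomial identity to write $p_j=1-F^{B}_{j+1,\mu_2+\delta}(S_{1,\tau_j})$, exploit that the samples are conditionally i.i.d.\ on $\cI_j$ to raise this to the power $f(t)$, split into a term geometrically small in $f(t)$ and a deviation term, and control the latter via $u^{\lambda}e^{-u}\leq(\lambda/e)^{\lambda}$ together with lower-bounding a Binomial cdf by a single pmf value. Where you genuinely diverge is the endgame: the paper bounds $\sum_{s\leq jy}f^{B}_{j,\mu_1}(s)/(f^{B}_{j,y}(s))^{\lambda}$ by discarding the factor $\binom{j}{s}^{1-\lambda}\leq 1$ and summing an explicit geometric series, which is what generates its conditions $R_\lambda>1$ and $d_\lambda(y,\mu_1)>0$ and its explicit $\lambda_0=\lambda_2$; you keep the entropy factor and run a Stirling/rate-function argument, using convexity of $\varphi_\lambda$ in $x$ to reduce to the endpoints $x=0$ and $x=q$, which yields a different threshold $\lambda_0=\ln(1-\mu_1)/\ln(1-\mu_2-\delta)$ and exponent $\min\bigl(K(\mu_2+\delta,\mu_1),\,\lambda\ln(1-\mu_2-\delta)-\ln(1-\mu_1)\bigr)-\eta$. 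Since the lemma only asserts the existence of some $\lambda_0>1$ and positive constants, this is acceptable; the paper's route buys fully explicit constants valid for every $j$ with no asymptotic slack, while yours buys a cleaner identification of where the constraint on $\lambda$ comes from (the $k=0$ term, i.e.\ the competition between $(1-\mu_1)^{j}$ and $(1-\mu_2-\delta)^{-\lambda j}$, which you isolate nicely) at the price of a Stirling correction $\eta$ and a ``large $j$, absorb small $j$ into $C$'' step. One parenthetical needs correcting: for the intervals $\cJ_k$ of the induction step the left endpoint is not $\cF_{\tau_j}$-measurable but a stopping time, and the reduction to a run of $\lceil f(t)\rceil$ conditionally i.i.d.\ samples should be justified through that (which is also what the paper implicitly does); your fallback union bound over starting positions is not harmless, since the extra factor of $t$ both changes the stated form of the bound and would break summability in the application whenever the admissible $\lambda<2$ --- fortunately your main argument never uses it.
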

The proof of this important lemma will be postponed to section \ref{Proofs} and all the constants are explicitly defined there. Another keypoint in the proof is the fact that a sample from a 
saturated suboptimal arm cannot fall too far from its true mean. The following 
lemma is easily adapted from Lemma 2 in \cite{Agrawal:Goyal}.
\begin{lemma}\label{LemmaA}
\[
\bP\left(\exists s\leq t, \exists a \neq 1 : \theta_{a,s}>\mu_a+\delta_a,N_{a,s}>C_a \ln(t)\right)\leq\frac{2(K-1)}{t^2}.
\]
\end{lemma}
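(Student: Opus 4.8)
The plan is to reduce the event to a comparison between two \emph{independent} Binomial random variables, using the Beta--Binomial identity recalled in Section \ref{preliminaries}, and then to conclude by Hoeffding's inequality. First I would apply a union bound over the $K-1$ suboptimal arms and over the rounds $s \le t$, so that it suffices to prove, for each fixed $a \neq 1$ and each $s \leq t$,
\[
\bP\left(\theta_{a,s} > \mu_a + \delta_a,\ N_{a,s} > C_a\ln(t)\right) \leq \frac{1}{t^3},
\]
since summing this over the at most $t$ values of $s$ and over $a$ gives $(K-1)/t^2 \le 2(K-1)/t^2$.

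For a fixed round $s$ I would then split on the value $n=N_{a,s}$, which ranges over $\{\lceil C_a\ln(t)\rceil,\dots,s\}$ on the event of interest. On $\{N_{a,s}=n\}$ the posterior is $\pi_{a,s}=\text{Beta}(S_{a,s}+1,\,n-S_{a,s}+1)$, and $S_{a,s}$ is the number of successes among the first $n$ rewards of arm $a$; viewing the reward sequence of arm $a$ as the initial segment of a fixed i.i.d.\ $\cB(\mu_a)$ stream, $S_{a,s}$ is a $\text{Bin}(n,\mu_a)$ variable, independent of the auxiliary uniform used to draw $\theta_{a,s}$. Conditioning on $S_{a,s}$ and applying the identity $F^{\text{Beta}}_{c,d}(y)=1-F^{\text{B}}_{c+d-1,y}(c-1)$ with $c=S_{a,s}+1$, $d=n-S_{a,s}+1$ gives
\[
\bP\left(\theta_{a,s}>\mu_a+\delta_a \,\mid\, N_{a,s}=n,\,S_{a,s}\right) = F^{\text{B}}_{n+1,\mu_a+\delta_a}(S_{a,s}) = \bP\left(\text{Bin}(n+1,\mu_a+\delta_a)\leq S_{a,s}\right).
\]
Bounding $\Ind_{(N_{a,s}=n)}\leq 1$ and taking expectations, one obtains $\bP(\theta_{a,s}>\mu_a+\delta_a,\,N_{a,s}=n)\leq \bP(\text{Bin}(n+1,\mu_a+\delta_a)\leq\text{Bin}(n,\mu_a))$, with the two Binomials independent.

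The core estimate is $\bP(\text{Bin}(n+1,\mu_a+\delta_a)\leq\text{Bin}(n,\mu_a))\leq e^{-n\delta_a^2/2}$, which I would get by coupling: write the two variables as $\sum_{l=1}^{n+1}U_l$ and $\sum_{l=1}^{n}V_l$ with $U_l\sim\cB(\mu_a+\delta_a)$, $V_l\sim\cB(\mu_a)$ all independent; since $U_{n+1}\geq0$, the event forces $\sum_{l=1}^{n}(V_l-U_l)\geq0$, and the centred i.i.d.\ sequence $V_l-U_l+\delta_a$ has mean $0$ and takes values in an interval of length $2$, so Hoeffding's inequality yields the bound. Since $C_a=32/(\mu_1-\mu_a)^2=8/\delta_a^2$, every term with $n\geq C_a\ln(t)$ satisfies $e^{-n\delta_a^2/2}\leq e^{-4\ln t}=t^{-4}$; summing over the at most $t$ admissible values of $n$ gives $\bP(\theta_{a,s}>\mu_a+\delta_a,\,N_{a,s}>C_a\ln(t))\leq t^{-3}$, which closes the argument (note that when $C_a\ln(t)>t$ the event is empty, so the bound is trivial there). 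The only delicate point is the reduction in the second paragraph: after removing the indicator $\Ind_{(N_{a,s}=n)}$ one must be sure that what remains really is a comparison of \emph{independent} Binomials, and this is exactly why one freezes a per-arm i.i.d.\ reward stream, so that ``the first $n$ rewards of arm $a$'' is a well-defined $\text{Bin}(n,\mu_a)$ regardless of the policy's adaptive choices and independent of the randomness used to sample $\theta_{a,s}$. Everything else is a routine union bound and a Hoeffding estimate; this is the adaptation of Lemma~2 of \cite{Agrawal:Goyal} referred to above.
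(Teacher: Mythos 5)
Your proof is correct. Note that the paper itself does not write out an argument for Lemma \ref{LemmaA}: it defers to Lemma 2 of \cite{Agrawal:Goyal}, whose proof is a two-step adaptation --- first a Chernoff--Hoeffding bound showing that the empirical mean $\muhat_{a,s}$ of a saturated arm rarely exceeds $\mu_a+\delta_a/2$, then a bound on the probability that the Beta sample exceeds $\mu_a+\delta_a$ when the empirical mean is below $\mu_a+\delta_a/2$ --- and this two-term structure is where the factor $2$ in $2(K-1)/t^2$ comes from. You instead give a self-contained one-step argument: after the union bound over $a$, $s$ and $n=N_{a,s}$, the Beta--Binomial identity together with a frozen i.i.d.\ reward stream reduces the event to $\bP\left(\mathrm{Bin}(n+1,\mu_a+\delta_a)\leq \mathrm{Bin}(n,\mu_a)\right)$ for two independent Binomials, which Hoeffding bounds by $e^{-n\delta_a^2/2}\leq t^{-4}$ once $n\geq C_a\ln(t)=8\ln(t)/\delta_a^2$. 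This is exactly the device the paper uses for term $A$ in Lemma \ref{LemmaMainThm1} (there applied to arm $1$ with threshold $\mu_1-\sqrt{6\ln(t)/N_{1,t}}$), so your route makes the analysis more uniform with the rest of the paper, avoids the intermediate empirical-mean event, and in fact delivers the slightly stronger constant $(K-1)/t^2$. The one genuinely delicate point --- that after dropping the indicator $\Ind_{(N_{a,s}=n)}$ what remains must be a comparison of truly independent variables, which requires identifying $S_{a,s}$ on $\{N_{a,s}=n\}$ with the sum of the first $n$ rewards of a fixed $\cB(\mu_a)$ stream, independent of the uniform variable generating $\theta_{a,s}$ --- is precisely the point you flag and handle, so the argument is complete.
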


\paragraph{Step 2: Bounding  $\bm{\bP(E_j\cap  F_{j,K-1})}$} On the event $E_j\cap F_{j,K-1}$, only saturated suboptimal arms are drawn
on the interval $\cI_{j,K}$. 
Using the concentration results for samples of these arms in Lemma \ref{LemmaA}, we get
\begin{align*}
\bP(E_j\cap F_{j,K-1}) \leq& \bP(\left\{\exists s\in\cI_{j,K}, a \neq 1 : \theta_{a,s}>\mu_a+\delta\right\} \cap E_j \cap F_{j,K-1})\\
&+ \bP(\left\{\forall s\in\cI_{j,K},a\neq 1:\theta_{a,s}\leq\mu_a+\delta_a\right\}\cap E_j\cap F_{j,K-1})\\
\leq &  \bP(\exists s\leq t,a\neq 1 : \theta_{a,s}>\mu_a + \delta_a, N_{a,t}> C_a \ln(t))\\
&+ \bP(\left\{\forall s\in\cI_{j,K},a\neq 1:\theta_{a,s}\leq\mu_2+\delta\right\}\cap E_j\cap F_{j,K-1})\\
\leq& \frac{2(K-1)}{t^2}+\bP(\theta_{1,s}\leq\mu_2+\delta,\forall s\in\cI_{j,K}).
\end{align*}
The last inequality comes from the fact that if arm 1 is not drawn, the sample $\theta_{1,s}$ must be smaller than some sample $\theta_{a,s}$ and therefore smaller than $\mu_2+\delta$.
Since $\cI_{j,K}$ is an interval in $\cI_j$ of size $\left\lceil\frac{t^{1-b}-1}{K}\right\rceil$ we get using Lemma \ref{calculs}, for some fixed $\lambda\in ]1,\lambda_0[$,
\begin{align}
\bP (\theta_{1,s}\leq\mu_2+\delta,\forall s\in\cI_{j,K}\}) \leq \left(\alpha_{\mu_1,\mu_2}\right)^{\frac{t^{1-b}-1}{K}} + C_{\lambda,\mu_1,\mu_2}\left(\frac{t^{1-b}-1}{K}\right)^{-\lambda} e^{-jd_{\lambda,\mu_1,\mu_2}}=:g(\mu_1,\mu_2,b,j,t).
\label{FirstChoice}
\end{align}
Hence we have show that
\begin{equation}
\bP(E_j\cap  F_{j,K-1})\leq \frac{2(K-1)}{t^2} + g(\mu_1,\mu_2,b,j,t),\label{LeftTerm}
\end{equation}
and choosing $b$ such that $b < 1 -\frac{1}{\lambda}$, the following hypothesis on $g$ holds:
$$\sum_{t\geq 1}\sum_{j\leq t^b} g(\mu_1,\mu_2,b,j,t) < +\infty.$$

\paragraph{Step 3: Bounding $\bm{\bP(E_j\cap F_{j,K-1}^c}$)}We show through an induction that for all $2\leq l \leq K$, if $t$ is larger than some deterministic constant
$N_{\mu_1,\mu_2,b}$ specified in the base case,
$$
\bP(E_j\cap F_{j,l-1}^c)\leq (l-2)\left(\frac{2(K-1)}{t^2}+f(\mu_1,\mu_2,b,j,t)\right)
$$
for some function $f$ such that $\sum_{t\geq 1}\sum_{1\leq j\leq t^b}f(\mu_1,\mu_2,b,j,t)<\infty$. For $l=K$ we get
\begin{equation}
\bP(E_j\cap F_{j,K-1}^c)\leq (K-2)\left(\frac{2(K-1)}{t^2}+f(\mu_1,\mu_2,b,j,t)\right).\label{RightTerm}
\end{equation}

\paragraph{Step 4: The Base Case of the induction} Note that on the event $E_j$ only suboptimal arms are played during $\cI_{j,1}$. 
Hence at least one suboptimal arm must be played $\lceil\frac{t^{1-b}-1}{K^2}\rceil$ times.

There exists some deterministic constant $N_{\mu_1,\mu_2,b}$ such that for $t\geq N_{\mu_1,\mu_2,b}$, $\lceil\frac{t^{1-b}-1}{K^2}\rceil \geq C\ln(t)$
(the constant depends only on $\mu_1$ and $\mu_2$ because $C=C_2$). So when $t\geq N_{\mu_1,\mu_2,b}$, at least  one  suboptimal arm must be
saturated by the end of $\cI_{j,1}$. Hence, for $t\geq N_{\mu_1,\mu_2,b}$
\[
\bP(E_j\cap F_{j,1}^c) = 0.
\]
This concludes the base case.

\paragraph{Step 5: The Induction} As an inductive hypothesis we assume that for some $2\leq l \leq K-1$ if $t \geq N_{\mu_1,\mu_2,b}$ then
\[
\bP(E_j\cap F_{j,l-1}^c)\leq (l-2)\left(\frac{2(K-1)}{t^2}+f(\mu_1,\mu_2,b,j,t)\right).
\]
Then, making use of the inductive hypothesis,
\begin{align*}
\bP(E_j\cap F_{j,l}^c)&\leq\bP(E_j\cap F_{j,l-1}^c)+\bP(E_j\cap F_{j,l}^c\cap F_{j,l-1})\\	
	&\leq (l-2)\left(\frac{2(K-1)}{t^2}+f(\mu_1,\mu_2,b,j,t)\right)+\bP(E_j\cap F_{j,l}^c\cap F_{j,l-1}).
\end{align*}
To complete the induction we therefore need to show that:
\begin{equation}
\bP(E_j\cap F_{j,l}^c\cap F_{j,l-1})\leq \frac{2(K-1)}{t^2}+f(\mu_1,\mu_2,b,j,t).\label{goalinduction}
\end{equation}

On the event $(E_j\cap F_{j,l}^c\cap F_{j,l-1})$, there are exactly $l-1$ saturated arms at the beginning of interval $\cI_{j,l}$ and no new arm is saturated during this interval.
As a result there cannot be more than $K C \ln(t)$ interruptions during this interval, and so we have
$$\bP(E_j\cap F_{j,l}^c\cap F_{j,l-1})\leq \bP(E_j\cap F_{j,l-1}\cap \{n_{j,l} \leq K C \ln(t)\}).$$

Let $\cS_{l}$ denote the set of saturated arms at the end of $\cI_{j,l}$ and introduce the following decomposition:
\begin{align*}
\bP(E_j\cap F_{j,l-1}&\cap \{n_{j,l} \leq K C \ln(t)\})\nonumber\\
\leq& \underbrace{\bP(\{\exists s\in\cI_{j,l},a\in \cS_{l-1}:\theta_{a,s}>\mu_a+\delta_a\}\cap E_j\cap F_{j,l-1})}_{A}\nonumber\\
&+ \underbrace{\bP(\{\forall s\in\cI_{j,l},a\in \cS_{l-1}:\theta_{a,s}\leq\mu_a+\delta_a\}\cap E_j\cap F_{j,l-1}\cap  \{n_{j,l} \leq K C \ln(t)\})}_{B}.
\end{align*}
Clearly, using Lemma \ref{LemmaA}:
$$(A) \leq \bP\left(\exists s\leq t, \exists a \neq 1  : \theta_{a,s}>\mu_a+\delta_a,N_{a,s}>C_a \ln(t)\right)\leq\frac{2(K-1)}{t^2}.$$
To deal with term (B), we introduce for $k$ in $\{0,\dots,n_{j,l}-1\}$  the random intervals $\cJ_k$ as the time range between the $k^{th}$ and 
$(k+1)^{st}$ interruption in $\cI_{j,l}$. For $k\geq n_{j,l}$ we set $\cJ_k=\varnothing$.
Note that on the event in the probability (B) there is a subinterval of $\cI_{j,l}$ of length $\left\lceil\frac{t^{1-b}-1}{CK^2\ln(t)}\right\rceil$ during which there are no interruptions. Moreover on this subinterval of $\cI_{j,l}$, for all $a\neq 1$, $\theta_{a,s}\leq\mu_2+\delta_2$. (This holds for unsaturated arms as well as for saturated arms since their samples are smaller than the maximum sample of a saturated arm.)
Therefore,
\begin{align}
(B)&\leq \bP\left(\left\{\exists k \in \{0,...,n_{j,l}\} : |\cJ_k|\geq \frac{t^{1-b}-1}{CK^2\ln(t)}\right\}\cap \left\{\forall s\in\cI_{j,l},  a\in \cS_{l-1} 1:\theta_{a,s}\leq\mu_2+\delta\right\}\cap E_j\cap F_{j,l-1}\right) \nonumber \\
& \leq \sum_{k=1}^{KC\ln(t)}\bP\left(\left\{|\cJ_k|\geq \frac{t^{1-b}-1}{CK^2\ln(t)}\right\}\cap \left\{\forall s\in\cJ_k,  a\neq 1:\theta_{a,s}\leq\mu_2+\delta\right\}\cap E_j\right) \nonumber\\
& \leq \sum_{k=1}^{KC\ln(t)}\bP\left(\left\{|\cJ_k|\geq \frac{t^{1-b}-1}{CK^2\ln(t)}\right\}\cap \left\{\forall s \in \cJ_k, \ \theta_{1,s}  \leq \mu_2 + \delta\right\}\right) \label{almostgeom}
\end{align}
Now, we have to bound the probability that $\theta_{1,s}  \leq \mu_2 + \delta$ for all $s$ in an interval of size $\frac{t^{1-b}-1}{CK^2\ln(t)}$ included in $\cI_j$. So we apply Lemma \ref{calculs} to get:
$$
(B) \leq C K \ln(t) \left(\alpha_{\mu_1,\mu_2}\right)^{\frac{t^{1-b}-1}{C K^2\ln(t)}} +  
C_{\lambda,\mu_1,\mu_2}\frac{CK\ln(t)}{\left(\frac{t^{1-b}-1}{C K^2\ln(t)}\right)^\lambda}e^{-jd_{\lambda,\mu_1,\mu_2}}:=f(\mu_1,\mu_2,b,j,t).
$$
Choosing the same $b$ as in (\ref{FirstChoice}), we get that $\sum_{t\geq 1}\sum_{1\leq j \leq t^b} f(\mu_1,\mu_2,b,j,t) < +\infty$. It follows that for this value of $b$, (\ref{goalinduction}) holds and the induction is complete.

\paragraph{Step 8: Conclusion} Let $b$ be the constant chosen in Step 2. From the decomposition (\ref{decomp}) and the two upper bounds (\ref{LeftTerm}) and (\ref{RightTerm}), we get, for $t\geq N_{\mu_1,\mu_2,b}$: 
$$\bP(E_j) \leq (K-2)\left(\frac{2(K-1)}{t^2}+f(\mu_1,\mu_2,b,j,t))\right)+\frac{2(K-1)}{t^2}+g(\mu_1,\mu_2,b,j,t).$$
Recalling (\ref{DecompInitial}), summing over the possible values of $j$ and $t$ we obtain:
$$
 \sum_{t\geq 1} \bP(N_{1,t} \leq t^b) \leq N_{\mu_1,\mu_2,b} + 2(K-1)^2\sum_{t\geq 1}\frac{1}{t^{2-b}} + \sum_{t\geq 1}\sum_{j=1}^{t^b}[Kf(\mu_1,\mu_2,b,j,t)+g(\mu_1,\mu_2,b,j,t)]<C_{\mu_1,\mu_2,b}
$$
for some constant $C_{\mu_1,\mu_2,b}<\infty$.

\subsection{Proof of Lemma \ref{calculs}}\label{Proofs}

On the interval the $\cJ$ (included in $\cI_j$ by hypothesis), the posterior distribution 
$\pi_{1,s}=\pi_{1,\tau_{j}}$ is fixed and $\theta_{1,s}$ is, when conditioned on $S_{1,\tau_j}$, an i.i.d. sequence with common distribution $\text{Beta}(S_{1,\tau_j} + 1,j - S_{1,\tau_j} +1)$. Hence,
$$\bP\left(\theta_{1,s} \leq \mu_2 + \delta |s \in \cJ \right)  = F_{(S_{1,\tau_j} + 1,j - S_{1,\tau_j} +1)}^{\text{Beta}}(\mu_2+\delta) = 1- F_{(j+1,\mu_2 + \delta)}^B(S_{1,\tau_j})$$
where we use the link between the tail of Beta and Bernoulli distribution mentioned above. Using the independence of the $\theta_{1,s}$  gives
$$\bP\left(\forall s\in J, \  \theta_{1,s} \leq \mu_2 + \delta |S_{1,\tau_j}\right)  = \left(1 -  F_{(j+1,\mu_2 + \delta)}^B(S_{1,\tau_j})\right)^{|J|}\leq \left(1 -  F_{(j+1,\mu_2 + \delta)}^B(S_{1,\tau_j})\right)^{f(t)}$$
Finally
$$
\bP\left(\forall s\in J, \  \theta_{1,s} \leq \mu_2 + \delta \right)  =  \bE\left[\bP\left(\forall s\in J, \  \theta_{1,s} \leq \mu_2 + \delta |S_{1,\tau_j}\right)\right] 
 \leq  \bE\left[\left(1 -  F_{(j+1,\mu_2 + \delta)}^B(S_{1,\tau_j})\right)^{f(t)}\right] 
$$
An exact computation of this expectation leads to
\[
\bE\left[(1-F^B_{(j+1,\mu_2+\delta)}(S_{1,\tau_j}))^{f(t)}\right]  =  \sum_{s=0}^j(1-F^B_{(j+1,\mu_2+\delta)}(s))^{f(t)}f^B_{j,\mu_1}(s)
\]
To simplify notation, from now on let $y=\mu_2+\delta$. Using, as in \cite{Agrawal:Goyal}, that $F_{j+1,y}^B(s)=(1-y)F_{j,y}^B(s) + yF_{j,y}^B(s-1) \geq (1-y)F_{j,y}^B(s)$, we get:
$$(1-F^B_{(j+1,y)}(s))^{f(t)}\leq \exp\left(-f(t)F^B_{(j+1,y)}(s)\right) \leq \exp\left(-f(t)(1-y)F^B_{(j,y)}(s)\right)$$
Therefore,
\[
\bE\left[(1-F^B_{(j+1,\mu_2+\delta)}(S_{1,\tau_j}))^{f(t)}\right]  \leq  \sum_{s=0}^j\exp\left(-f(t)(1-y)F^B_{(j,y)}(s)\right)f^B_{j,\mu_1}(s)
\]
Using the fact that for $s\geq \lceil yj\rceil, F^B_{j,y}(s) \geq \frac{1}{2}$ (since the median of a binomial distribution with parameters $j$ and $y$ 
is $\lceil yj\rceil$ or $\lfloor yj\rfloor$), we get
\begin{align*}
\bE&\left[(1-F^B_{(j+1,\mu_2+\delta)}(S_{1,\tau_j}))^{f(t)}\right] \\
 & \leq  \sum_{s=0}^{\lfloor jy \rfloor} \exp\left(-f(t)(1-y)F^B_{(j,y)}(s)\right)f^B_{j,\mu_1}(s) 
+ \sum_{s=\lceil jy \rceil}^{j} \left(\frac{1}{2}\right)^{(1-y)f(t)}f^B_{j,\mu_1}(s) \\
& \leq \underbrace{\sum_{s=0}^{\lfloor jy \rfloor}\exp\left(-f(t)(1-y)F^B_{(j,y)}(s)\right) f^B_{j,\mu_1}(s)}_{E} + \left(\frac{1}{2}\right)^{(1-y)f(t)}.
\end{align*}
It is easy to show that for every $\lambda>1$,$\forall  x >0,  x^\lambda\exp(- x) \leq \left(\frac{\lambda}{e}\right)^\lambda$ This allows us to upper-bound the exponential
for all $\lambda>1$, using $C_\lambda=\left(\frac{\lambda}{e}\right)^\lambda$,by:
\begin{align*}
(E)  \leq \frac{C_\lambda}{\left(f(t)(1-y)\right)^\lambda} \sum_{s=0}^{\lfloor jy \rfloor}\frac{f^B_{j,\mu_1}(s)}{\left(F^B_{(j,y)}(s)\right)^\lambda} 
\leq
\frac{C_\lambda}{\left(f(t)(1-y)\right)^\lambda} \sum_{s=0}^{\lfloor jy \rfloor}\frac{f^B_{j,\mu_1}(s)}{\left(f^B_{(j,y)}(s)\right)^\lambda}
\end{align*}
Now, inspired by Agrawal and Goyal's work (proof of Lemma 3) we compute:
\begin{align*}
 \frac{f^B_{j,\mu_1}(s)}{\left(f^B_{(j,y)}(s)\right)^\lambda}&=\frac{{j \choose s}\mu_1^s(1-\mu_1)^{j-s}}{{j \choose s}^\lambda(y^\lambda)^s((1-y)^\lambda)^{j-s}} \leq \frac{\mu_1^s(1-\mu_1)^{j-s}}{(y^\lambda)^s((1-y)^\lambda)^{j-s}} \\
& = \left(\frac{1-\mu_1}{(1-y)^\lambda}\right)^j\left(\frac{\mu_1(1-y)^\lambda}{y^\lambda(1-\mu_1)}\right)^s
\end{align*}
Let $R_\lambda(\mu_1,y)=\frac{\mu_1(1-y)^\lambda}{y^\lambda(1-\mu_1)}$. There exists some $\lambda_1>1$ such that, if $\lambda < \lambda_1$, $R_\lambda>1$.
More precisely, 
$$R_\lambda>1 \Leftrightarrow \frac{\mu_1}{1-\mu_1}>\left(\frac{y}{1-y}\right)^\lambda \Leftrightarrow \ln\left(\frac{\mu_1}{1-\mu_1}\right) > \lambda\ln\left(\frac{y}{1-y}\right)$$
and so 
$$ \lambda_1(\mu_1,y) = 
\left\{
\begin{array} {cl}
\frac{\ln\left(\frac{\mu_1}{1-\mu_1}\right)}{\ln\left(\frac{y}{1-y}\right)} & \text{if} \ y > \frac{1}{2}\\
+\infty & \text{if} \ y < \frac{1}{2}
\end{array}
\right.
$$
For $1<\lambda<\lambda_1$:
\begin{align*}
\sum_{s=0}^{\lfloor jy \rfloor}\frac{f^B_{j,\mu_1}(s)}{(f^B_{(j,\mu_2+\delta)}(s))^\lambda} & \leq \left(\frac{1-\mu_1}{(1-y)^\lambda}\right)^j
\sum_{s=0}^{\lfloor jy \rfloor}R_\lambda^s =\left(\frac{1-\mu_1}{(1-y)^\lambda}\right)^j\frac{R_\lambda^{\lfloor jy \rfloor+1}-1}{R_\lambda-1} \\
& \leq \left(\frac{1-\mu_1}{(1-y)^\lambda}\right)^j\frac{R_\lambda}{R_\lambda-1}R_\lambda^{jy}  = \frac{R_\lambda}{R_\lambda-1}\left(\frac{1-\mu_1}{(1-y)^\lambda}\right)^{j-jy}\left(\frac{\mu_1}{y^\lambda}\right)^{jy} \\
& = \frac{R_\lambda}{R_\lambda-1}e^{-jd_\lambda(y,\mu_1)}
\end{align*}
where $d_\lambda(y,\mu_1)=y\ln\left(\frac{y^\lambda}{\mu_1}\right) + (1-y)\ln\left(\frac{(1-y)^\lambda}{1-\mu_1}\right)$. Rearranging we can write
$$d_\lambda(y,\mu_1)  =  \lambda\left[y\ln(y) + (1-y)\ln(1-y)\right] - \left[y\ln(\mu_1) + (1-y)\ln(1-\mu_1)\right]$$
which is an affine function of $\lambda$ with negative slope ($y\ln(y) + (1-y)\ln(1-y)<0$ for all $y\in(0,1)$) and $d_1(y,\mu_1)=K\left(y,\mu_1\right)>0$.
Hence, for fixed $0<y<\mu_1\leq 1$ this function is positive whenever 
$$\lambda < \frac{y\ln(\mu_1) + (1-y)\ln(1-\mu_1)}{y\ln(y) + (1-y)\ln(1-y)}=:\lambda_2(\mu_1,y).$$
Clearly, $\lambda_2(\mu_1,y)>1$ and we choose $\lambda_0=\min(\lambda_1,\lambda_2)$. After some calculation one can show that $\lambda_2\leq \lambda_1$, and therefore that
$$\lambda_0(\mu_1,\mu_2)=\lambda_2(\mu_1,\mu_2+\delta)=1 + \frac{K(\mu_2+\delta,\mu_1)}{(\mu_2+\delta)\ln\frac{1}{\mu_2+\delta} + (1-\mu_2 - \delta)\ln\frac{1}{1-\mu_2 - \delta}}.$$
To obtain the constants used in the statement of the lemma we define $d_{\lambda,\mu_1,\mu_2}:=d_\lambda(y,\mu_1)$
\[
C_{\lambda,\mu_1,\mu_2}:=C_{\lambda_0}(1-\mu_2-\delta)^{-\lambda}\frac{R_\lambda}{1-R_\lambda}.
\]
This concludes the proof.

\section{Experiments}\label{experiments}

We illustrate here the performance of Thompson Sampling on numerical experiments with Bernoulli rewards. First we compare in terms of cumulative regret up to horizon 
$T=10000$ Thompson Sampling to UCB, KL-UCB and Bayes-UCB in two different two-arms problem, one with small rewards and the other with high rewards, with different gaps 
between the 
parameters of the arms. Figure \ref{2armsexp} shows Thompson Sampling always outperforms KL-UCB and also Bayes-UCB for large horizons. The three optimal policies 
are significantly better than UCB, even for small horizons.

Figure \ref{10armsexp} displays for several algorithms an estimation of the distribution of the cumulative regret based on $N=50000$ trials, for a horizon 
$T=20000$ in a 10-armed bandit problem with $$\mu=(0.1,0.05,0.05,0.05,0.02,0.02,0.02,0.01,0.01,0.01).$$
The first two algorithms are variants of UCB. Of these the UCB-V algorithm is close to the index policy to which Thompson Sampling is compared in \cite{ChapelleLiEmpirical} in the Bernoulli setting, but this
policy is not known to be optimal. This algorithm incorporates an estimation of the variance of the rewards in the index which is defined to be, for an arm that have produced $k$ rewards in $n$ draws,
$$\frac{k}{n} + \sqrt{\frac{2\log(t)}{n}\frac{k}{n}\left(1-\frac{k}{n}\right)} + \frac{3\log(t)}{n}$$
The other algorithms displayed in Figure \ref{10armsexp} have a mean regret closer (sometimes smaller) than the lower bound (which is only 
asymptotic), and among them, Thompson is the best. It is also the easiest optimal policy to implement, since the optimization problem solved in KL-UCB 
and even the computation of the quantiles in Bayes-UCB are more costly than producing one sample from the posterior for each arm.
\begin{figure}[p]
  \centering
  \begin{minipage}{0.49\textwidth}
  \includegraphics[width=1.1\linewidth]{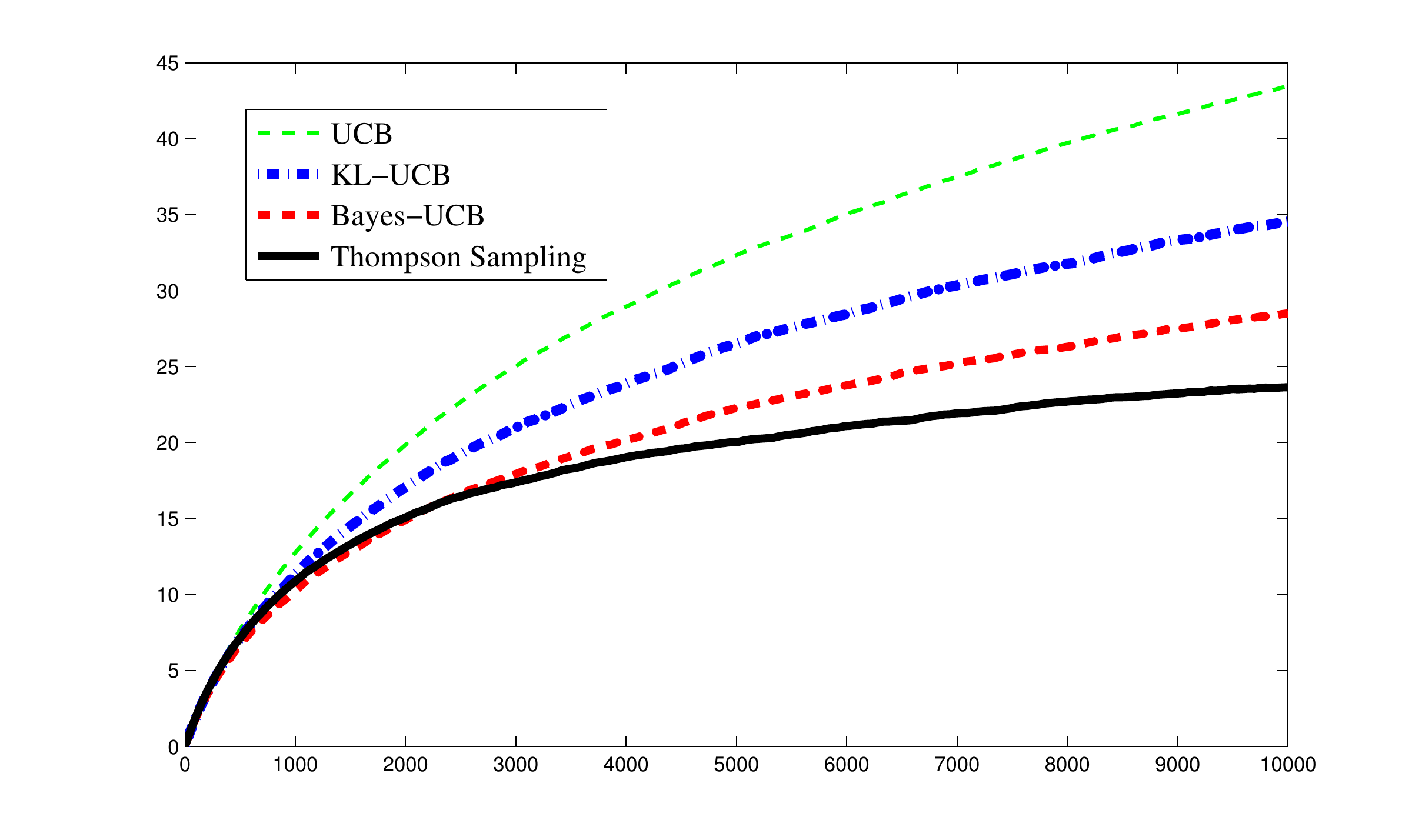}
  \end{minipage}
  \begin{minipage}{0.49\textwidth}
  \includegraphics[width=1.1\linewidth]{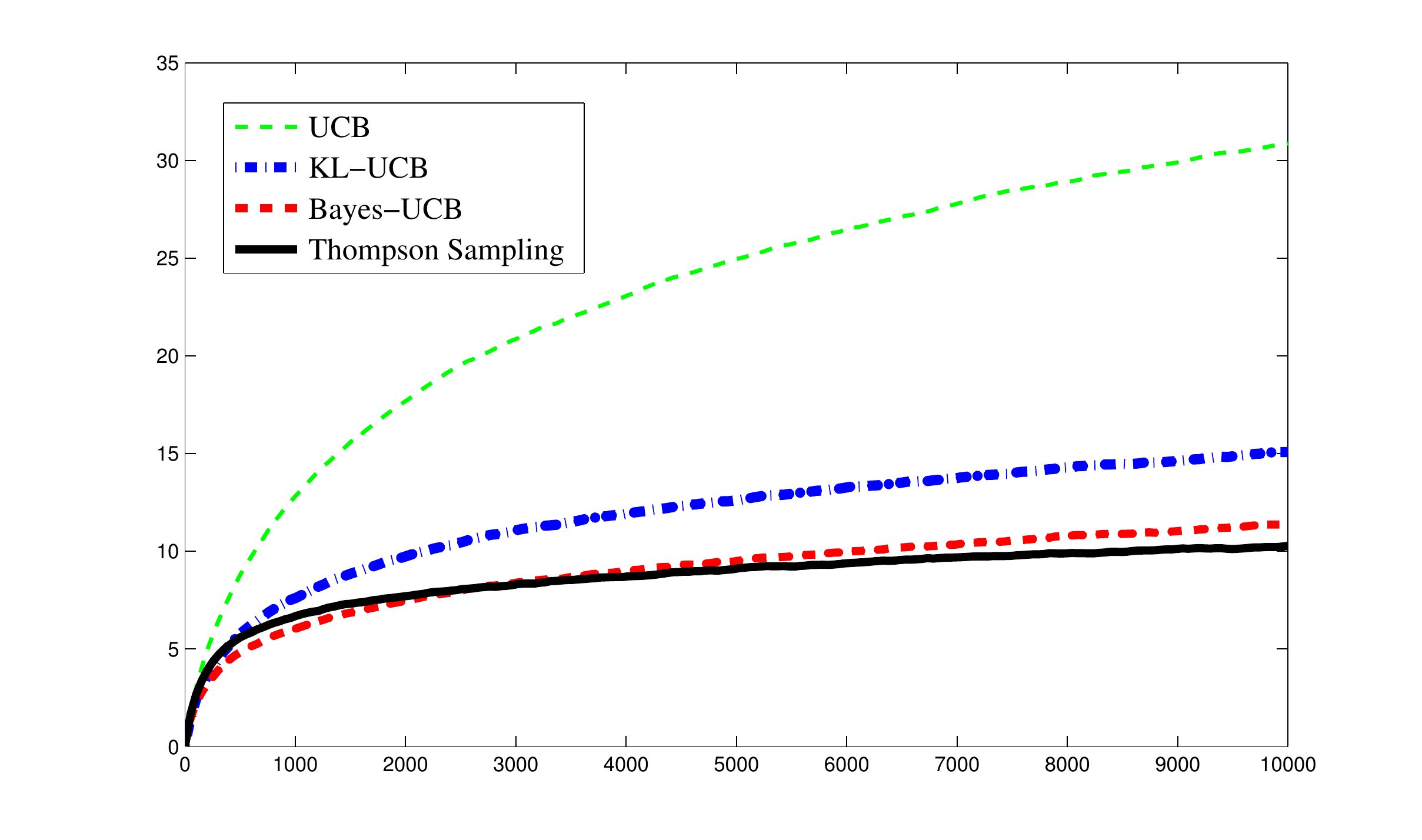}
  \end{minipage}
  \caption{\label{2armsexp} Cumulated regret for the two armed-bandit problem with $\mu_1=0.2, \mu_2 = 0.25$ (left)
    and $\mu_1=0.8, \mu_2=0.9$ (right). Regret is estimated as an average over $N=20000$ trials.}
\end{figure}
\begin{figure}[p]
  \centering
  \includegraphics[width=0.9\linewidth]{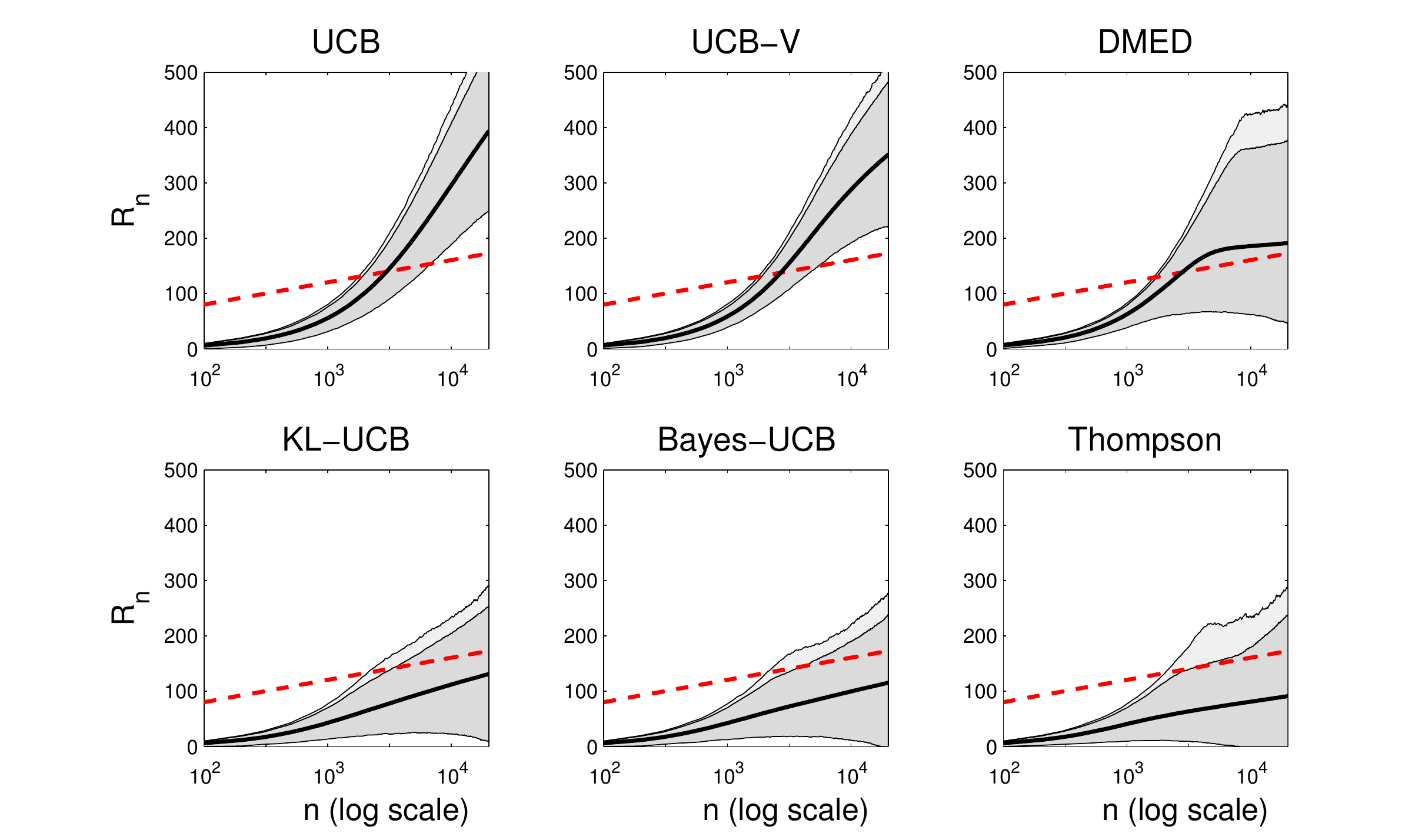}
  \caption{\label{10armsexp} Regret of the various algorithms as a function of time (on a log scale).
 On each graph, the red dashed line shows the lower bound, the solid bold curve corresponds
to the mean regret while the dark and light shaded regions show respectively the central 99\%
and the upper 0.05\%}
\end{figure}

\section{Discussion}

This paper provides the first proof of the asymptotic optimality of Thompson Sampling for Bernoulli bandits.
Moreover the proof consists in a finite time analysis comparable with that of other known optimal policies.
We also provide here simulations showing that Thompson Sampling outperforms currently known optimal policies.

Our proof of optimality borrows some ideas from Agrawal and Goyal's paper, such as the notion of saturated arms.
However we make use of ideas together with our own to obtain a stronger result, namely control over the tail of $N_{1,t}$ rather than its expectation.
This is a valuable result which justifies the complexity of the proof of Proposition 2. 
Indeed control over these tails allows us to give a simpler finite time analysis for Thompson Sampling 
which is closer to the arguments for UCB-like algorithms, and also yields the optimal asymptotic rate of Lai and Robbins. 

Thanks to the generalisation pointed out in \cite{Agrawal:Goyal}, 
the Bernoulli version of Thompson Sampling can be applied to bandit problems with bounded rewards, 
and is therefore an excellent alternative to UCB policies.
It would also be very natural to generalise Thompson to more complex reward distributions, 
choosing a prior appropriate for the assumptions on these distributions. 
Indeed, even in complex settings where the prior is not computable, Thompson Sampling only requires one sample from the posterior, 
which can be obtained efficiently using MCMC. 
Encouraging numerical experiments for reward distributions in the exponential family using a conjugate prior suggest that 
a generalisation of the proof is achievable. However this poses quite a challenge since the proof here is often heavily dependent on 
specific properties of Beta distributions. 
A natural generalisation would need a prior-dependent finite-time result controlling the 
tail probabilities of posterior distributions as the number of samples increases.

\paragraph{Acknowledgments} We thank Aur\'elien Garivier and Olivier Capp\'e, for many fruitful discussions
and for giving us the opportunity to work together.

This work was supported by the French National Research Agency (ANR-08-COSI-004 project EXPLO-RA) and the European Community’s Seventh Framework Programme (FP7/2007-2013) under grant agreements n\textopenbullet\ 216886 (PASCAL2), and n\textopenbullet\ 270327 (CompLACS).

\bibliography{biblio}

\end{document}